\newcommand{\vucb}{v^{\sf UCB}}
\newcommand{\vlcb}{v^{\sf LCB}}
\newcommand{\prob}{\mathbb{P}}
\newcommand{\sumt}{\sum_{t = 1}^T}
\newcommand{\sumS}{\sum_{S \in \mathcal{S}}}
\newcommand{\epochl}{\mathcal{E}_\ell}
\newcommand\numleq[1]%
\newcommand{\vmax}{v_{\max}}
\begin{document}


 \RUNAUTHOR{Aznag, Goyal, and Perivier}

\RUNTITLE{MNL-Bandit with Knapsacks}

\TITLE{MNL-Bandit with Knapsacks: a near-optimal algorithm}

\ARTICLEAUTHORS{%
\AUTHOR{Abdellah Aznag\qquad\qquad Vineet Goyal\qquad\qquad Noemie Perivier}
\AFF{Department of Industrial Engineering and Operations Research, Columbia University}
} 

\ABSTRACT{%
We consider a dynamic assortment selection problem where a seller has a fixed inventory of $N$ substitutable products and faces an unknown demand that arrives sequentially over $T$ periods. In each period, the seller needs to decide on the assortment of products (satisfying certain constraints) to offer to the customers. The customer's response follows an unknown multinomial logit model (MNL) with parameter $\boldsymbol{v}$. If customer selects product $i \in [N]$, the seller receives revenue $r_i$. The goal of the seller is to maximize the total expected revenue from the $T$ customers given the fixed initial inventory of $N$ products. We present MNLwK-UCB, a UCB-based algorithm and characterize its regret under different regimes of inventory size. We show that when the inventory size grows quasi-linearly in time, MNLwK-UCB achieves a $\Tilde{O}(N + \sqrt{NT})$ regret bound. We also show that for a smaller inventory (with growth $\sim T^{\alpha}$, $\alpha < 1$), MNLwK-UCB achieves a $\Tilde{O}(N(1 + T^{\frac{1 - \alpha}{2}}) + \sqrt{NT})$. In particular, over a long time horizon $T$, the rate $\Tilde{O}(\sqrt{NT})$ is always achieved regardless of the constraints and the size of the inventory.
}%


\KEYWORDS{Exploration-exploitation, assortment optimization, multinomial logit, inventory constraints, online learning}

\maketitle

%

\section{Introduction}
Modeling customer choice has been a central challenge in revenue management. One of the important attributes to capture is the substitution effect, where the presence or absence of one product affects the probability of sale of another product. Choice models attempt to do so by specifying the probabilities of sale of a given product within an offered subset. In practice, this paradigm comes with two challenges. The first challenge is the uncertainty aspect of the customer behavior. The choice function is unknown and the seller needs to simultaneously learn the choice model while maximizing their revenue. This gives rise to many sequential decision making strategies that attempt to solve the underlying \text{exploration-exploitation} trade-off. The second challenge arises from  finite initial inventory. The seller has to factor-in the specifics of the choice model in their inventory management. This leads to knapsack-like considerations that attempt to solve the underlying \text{capacity-demand} trade-off. While both challenges have been successfully addressed separately for many known choice models, their joint interaction presents new challenges that are not fully understood. It is precisely this joint interaction that we study in this paper. The Multinomial Logit Model (MNL) is the most widely used choice model for assortment selection problems, for both its simplicity and tractability. It is built around the assumption that each product has an intrinsic \textit{utility} that is proportional to the probability of being chosen within an offered assortment. Even for such a fundamental model, the question of how a finite inventory should be managed when the utilities are unknown is still not satisfactorily answered. 

We study a dynamic assortment optimization problem under bandit feedback, where a seller with a fixed initial inventory of $N$ substitutable products faces a sequence of i.i.d. customer arrivals (with an unknown distribution) over a time horizon of $T$ periods, and needs to decide in each period on an assortment of products to offer to the customer to maximize the total expected revenue. Such a problem arises in many applications including online retail, advertisement, and recommendation systems. 
"The seller initially has no (or only limited) information about the preferences of the customers and needs to learn them through repeated interaction with \text{i.i.d.} customers. Specifically, in each period, the seller offers an assortment to the customer; the customer makes a choice from the assortment according to the unknown preferences of the MNL model, and the seller only observes the eventual choice from the given assortment and needs to update the estimate and future actions under this bandit feedback. This problem exemplifies the classical trade-off between exploitation and exploration; the seller needs to simultaneously gain information about the customer's preferences and offer revenue-maximizing assortments. The presence of a finite inventory significantly changes how the seller is supposed to behave in the face of uncertainty. Intuitively, the \text{informativeness} of an assortment must incorporate the \textit{availability} of the offered products. For example, it is not clear how a rare but valuable item competes against an abundant but less valuable item, especially when \textit{utility} is unknown and has to be learned.


More formally, we consider the MNL-bandit problem with knapsack constraints, where a seller sells $N$ products over a time horizon of $T$ periods. Each product $i\in [N]$ is associated with a revenue $r_i$ and has an initial inventory level of $q_i T$ units, which cannot be replenished once it has been exhausted. In each period $t$, a customer arrives and then offers an assortment $S_t\subseteq [N]$ selected from a constrained subset $\mathcal{S}$ of assortments.
The customer purchases $c_t\in S_t\cup \{0\}$ according to a multinomial logit (MNL) choice model with unknown parameters. Here, 0 denotes the no-purchase option. The MNL choice model, first introduced in \cite{Luce59},  \cite{Plackett} and \cite{Mcfadden1977ModellingTC}, can be described as follows: given the assortment $S$, the probability that product $i$ is purchased is given by:
\begin{align*}
    \pi(i, S) = \begin{cases}
    \frac{v_i}{v_0+\sum_{j\in S} v_j} \qquad &\text{ if $i \in S \cup \{0\}$}\\
    0 \qquad &\text{ otherwise,}
    \end{cases}
\end{align*}
where $\{v_i\}_{i\in [N] \cup \{0\}}$ are model parameters unknown to the seller. The sale of product $i$ decreases by one unit the available amount of product $i$. When set $S$ is offered, the purchase probability of product $i\in S$ is equal to $\pi(i, S)$, while the expected revenue $R(S)$ of assortment $S$ can be written as follows:
\begin{equation*}
    R(S) = \sum_{i\in S} r_i \pi(i, S).
\end{equation*}

We would like to mention that even in the case where the model parameters are known to the seller, we are not aware of any computationally efficient algorithm to compute the optimal dynamic policy that respects the hard inventory constraints. Note that this contrasts with the static MNL assortment problem without inventory constraints, where optimal solutions are well characterized (see for instance \cite{Talluri}, \cite{Rusmevichientong2010}, \cite{Dsir2014NearOptimalAF} and \cite{Davis2013AssortmentPU}). Following previous works on bandits with knapsack constraints (for example, \cite{Agrawal_global_bandits} and \cite{Badanidiyuru}), we thus consider a stronger benchmark, where the hard inventory constraints are relaxed. This benchmark is based on an exponential-size LP (described more precisely in Section \ref{sec:preliminaries}).

\subsection{Our contributions}
\vspace{0.2cm}

We present MNLwK-UCB, a UCB-based algorithm for the MNL-bandit problem with knapsack constraints. 
Our main result is a characterization of the regret of MNLwK-UCB, as we show that 
\small
\begin{equation}\label{eq:regretbound}
    \text{Regret}_T \lesssim N r_{max}+ \left(r_{\sf inv} + \|\boldsymbol{v}\|_2 r_{\sf opt} + \|\boldsymbol{v}\cdot\boldsymbol{r}\|_2\right)\sqrt{T},
\end{equation}
\normalsize
(where $\lesssim$ hides numerical constants and polylogarithmic factors in $N$ and $T$). Here, $r_{\sf opt}$ is the optimal revenue in the fluid relaxation, and $r_{\sf inv}$ captures the cost of having a finite inventory. Both quantities will be stated in detail in the next section. The bound stated in Inequality \eqref{eq:regretbound} allows us to compute the regret depending on the size of the inventory, which is represented by the growth of $\boldsymbol{q}$ in $T$.

We show that under the large-inventory setting (i.e., all initial capacities $q_i T$ grow quasi-linearly with the time horizon $T$), our regret bound becomes $\Tilde O\left(N + \sqrt{NT}\right)$, which recovers the  known infinite inventory bound. We also show that for the smaller inventory settings, this regret rate is maintained. In particular, in the sub-linear growth setting (i.e., when capacities satisfy $q_i = \Theta(T^{\alpha})$, $\alpha < 0$), the regret bound is in $\tilde{O}\left(N\left(1 + T^{\frac{1 - \alpha}{2}}\right)+ \sqrt{NT}\right)$. 

Note that these bounds do not depend on the complexity of the constraints set $\mathcal{S}$. In particular, when $\mathcal{S}$ represents cardinality constraints of cardinal at most $K$, we achieve a $K$-free regret bound. 

To the best of our knowledge, these bounds are the best known bounds in all the capacity settings we consider. We summarize these results in Table \ref{contributions}, where we also compare with the existing work for the particular case where the constrained set $\mathcal{S}$ consists of cardinality constraints.

\small{
\begin{table}[h]\label{contributions}
\centering
\resizebox{\columnwidth}{!}{
\begin{tabular}{c|c||c|c}
 \textbf{Inventory size}& \textbf{Definition}& \textbf{Regret} & \textbf{Prior best bound} \\
\hline
Quasi-linear growth &$q_{\min}^{-1} = \Tilde{O}(1)$ & $\tilde{O}\left(N + \sqrt{NT}\right)$& $\Tilde{O}(N K^{5/2} + \sqrt{NT})$ (\cite{miao2021general})\\
\hline

\hline
Sub-linear growth &$q_{\min}^{-1} = \Theta(T^\alpha), \alpha \in (0, 1]$ & $\tilde{O}\left(N + N T^{\frac{1-\alpha}{2}} + \sqrt{NT}\right)$& $\Tilde{O}(K^{2/3} N^{1/3}) T^{2/3} )$  for $\alpha = 2/3$ (\cite{Cheung_assort_inventory})\\
\hline

\hline
\end{tabular}}
\label{tableresults}
\caption{Summary of contributions}
\end{table}}
\normalsize

Note that our work is closely related to the Bandits with Knapsacks problem (see for instance \cite{Agrawal_global_bandits}, \cite{Badanidiyuru}, \cite{pmlr-v45-Xia15} and \cite{DBLP:journals/corr/XiaLQYL15}), for which algorithms achieving optimal regrets in the linear-growing inventory regime have been recently proposed. However, these results are not directly applicable to our setting, as it would involve considering each possible assortment as an individual arm in the multi-arm bandits framework. This would yield an exponential action space, and would therefore not lead to a tractable policy. On the contrary, by using the specific structure of MNLwK-UCB, combined with the specific properties of the MNL model, we are able to formulate an exponential LP similar to the one used in \cite{Agrawal_global_bandits}, for which we can design an efficient algorithm.

The idea behind MNLwK-UCB consists of mimicking the behavior of the optimal fluid policy that knows the true utilities $\boldsymbol{v}$. While this idea is simple, the main challenge is to satisfy simultaneously \textit{computational tractability} and \textit{low regret}. Intuitively, maintaining computational tractability requires dealing with the exponential size of the action space (that is growing even more with inventory constraints), which might require compressing the information on which the decision is based. More often than not, this compression leads to a sub-optimal learning of the optimal fluid policy, which itself leads to sub-optimal regret. On the other hand, maintaining low regret requires an efficient learning of the optimal fluid policy, which in turn cannot be done without maintaining enough information about this policy, which might mean confronting the large size of the action space. With this in mind, overcoming the main challenge requires compressing the information on the optimal fluid policy the right way.

We overcome this trade-off by solving the fluid policy of an alternative \textit{accessible} choice function that uniformly approaches $\pi$. The alternative choice function $\pi^{\sf UCB}$ represents a high probability upper confidence bound on the true $\pi$, and is constructed in an online fashion based on collected data. Playing the optimal fluid policy of an alternative choice function has the benefits of isolating the computational challenges of the problem from its learning challenges, and defers the former to the abundant offline assortment optimization literature, allowing us to focus on the latter. 

Our main contribution lies in the approach we take in our regret analysis. We view the interaction between the seller and the customer as a repeated game, where the new goal of the seller is not to maximize revenue, but to play the optimal fluid policy as fast as possible, and the moves of the customer are constrained by the unknown MNL parameters. This new goal implicitly protects the seller against inventory mis-management, as the target fluid policy correctly handles the capacity constraints. From this modified point of view, measuring the regret of a policy is conducted by measuring the consumption rate of each product, and a good policy should match the (unknown) optimal consumption rates as fast as possible. By these standards, we show that the performance of any policy can be characterized by three (possible) sources of revenue loss. The first one being the cost of playing a randomized strategy on the seller's side (i.e., the seller plays not necessarily an assortment but a sample from a distribution of assortments), the second one being the cost of randomness in the customer's choice (i.e., the seller does not observe $\pi$, but only a few realizations of $\pi$), and the third one being the \textit{cumulated} mis-specification of the optimal consumption rates through the game (i.e., the seller plays with a -constantly improving- belief on what the optimal consumption rates are, instead of what they truly are). 

Decomposing the performance into these three independent losses provides a \textit{modular} approach to tackle the \textit{utility-revenue-capacity} trade-off in uncertain environments. The first loss depends entirely on the steadiness of the seller's decisions, and it suggests that the revenue decreases with the variability of the seller's decisions across the time horizon. In other words, the more consistent a seller is in their choices, the better the policy (all else being equal). Even though the second loss seems to depend on the choice model only (in particular, it seems algorithm-agnostic), it suggests that offering the same assortments repeatedly has a regularization effect that is beneficial to the seller. The third loss captures the learning aspect of the problem. It puts an equal weight on time steps. In particular, it discards strategies that spend an unbalanced inventory at first, or strategies that conserve a fraction of their inventory only to attempt to deplete them at the late stages of the sale. It also suggests that what matters is not so the seller's knowledge of the choice function (exponential in size), as much as their knowledge of the optimal consumption rates (linear in size). In our information-compression analogy stated earlier this section, this third loss suggests that the optimal consumption rates are "the right compressor" for the inventory problem.

MNLwK-UCB is constructed so that all three losses are as low as possible. While the first and second losses are insured by the convergence of the distribution that is played, it is to no surprise the third loss that is hard to minimize, as it encapsulates the heart of the problem's trade-off. In particular, one parameter that is hard to bypass is the complexity of the constrained assortment subset $\mathcal{S}$. By leveraging structural properties that are specific to the MNL model, we successfully yield a regret bound that does not depend on the choice of $\mathcal{S}$ for linear-growing inventories, matching existing bounds for the infinite inventory setting.


\subsection{Related work}
\vspace{0.2cm}
\noindent\textbf{Dynamic assortment optimization.}
The dynamic assortment optimization problem was first considered by \cite{Caro_gallien}, under the assumption of independent demand across the products present in each assortment. More recent works incorporate a MNL model into the problem (see for instance \cite{Rusmevichientong2010}, \cite{Saure_zeevi2013}, \cite{Wang2018}, \cite{Chen_unconstrained} \cite{AgrawalAGZ17_TS} and  \cite{DBLP:journals/corr/AgrawalAGZ17a}). In particular, \cite{Chen_unconstrained} achieves the optimal $\Tilde{O}(\sqrt{T})$ regret bound in the unconstrained setting. When there is an extra cardinality constraint on the valid assortments, \cite{DBLP:journals/corr/AgrawalAGZ17a} and \cite{AgrawalAGZ17_TS} propose respectively UCB and Thompson-sampling based policies which achieve near-optimal $\widetilde O(\sqrt{NT})$ regrets matching the information theoretic lower bounds.
However, this prior stream of work does not address inventory constraints. Moreover, we would like to note that MNLwK-UCB estimates the unknown utilities $\boldsymbol{v}$ based on the sampling-based estimation presented in \cite{DBLP:journals/corr/AgrawalAGZ17a} for the MNL-bandit problem.

Our problem is formulated and studied in \cite{Cheung_assort_inventory}, where the authors introduce the MNL-bandit problem under inventory constraints and cardinality constraints $K$ (meaning the size of the selected assortment should not exceed $K$).
Additionally, the authors assume that the seller has knowledge of a radius $R>0$ satisfying $R^{-1} \leq v_{\min} \leq v_{\max} \leq R$, as well as a large inventory assumption $q_{\min}^{-1} \lesssim T^{2/3}$ and propose two algorithms: a computationally efficient online policy, which incurs a regret $\Tilde O\left((RK)^{2/3}T^{2/3}\sqrt{N}\right)$, and a UCB-based algorithm, which achieve a near optimal $\Tilde O\left(R^{3} K^{5/2}N\sqrt{T}\right)$ regret in the large-inventory regime. However, this last policy is not computationally efficient as it requires solving an exponential-size LP at each step. In our work, we study a more general constrained problem (where we consider subsets within a set $\mathcal{S}$). We do not make any lower bound assumptions on $\boldsymbol{v}$, and our approach works without any assumptions on the initial inventory). Our main methodological novelty consists in considering a different exponential-size LP, based on optimistic confidence bounds instead of exploration bonuses, and show that it can be efficiently solved, while still achieving a better regret guarantee as in \cite{Cheung_assort_inventory}. 
 
Finally, we would like to mention \cite{miao2021general}, a work conducted at the same time. One of the models they tackle is exactly the cardinality-constrained assortment selection question under the MNL model, which corresponds to the special case where $\mathcal{S}$ represents assortments of bounded cardinality. Both of our approaches involve UCB estimators as a proxy, and an epoch-based offering method. While \cite{miao2021general} adopt a primal dual optimization approach, we resolve a LP fluid relaxation before each epoch. It is worthy to note that both approaches are efficient (polynomial in $N$ and linear/quasi-linear in $T$). Moreover, \cite{miao2021general} adopt a fixed assortment update budget (i.e., updating assortments once after every $L$ periods) where $L$ depends on the upper bound on $\boldsymbol{v}$, as well as the cardinality constraint bound $K$. Consequently, their approach achieves a regret bound of $\Tilde{O}(\sqrt{NT} + N K^{5/2})$ in the regime where the inventory grows quasi-linearly, compared to our $\Tilde{O}(\sqrt{NT} + N)$ bound. Our work uses a variable-length epoch approach (until a no-purchase occurs) that does not depend on any prior knowledge on $\boldsymbol{v}$, or the constrained set $\mathcal{S}$. Consequently, MNLwK-UCB can be easily adapted to the case where no assumptions on $\boldsymbol{v}$ are made. Moreover, MNLwK-UCB applies to a large class of constrained sets, and the resulting regret bound we obtain does not depend on the choice of $\mathcal{S}$. Finally, our work applies without any assumption on the size of the inventory.

\vspace{0.2cm}
\noindent\textbf{Online resource allocation and Online Assortments.} Our problem is also closely related to the setting of online assortment and online resource allocation. Specifically, the online assortment with inventory constraints problem has been extensively studied when the seller can observe in each period the arriving customer's 'type' (which specifies the choice model governing the purchase decisions). When the customers belong to different segments with known arrival rates, this problem is the well-known choice-based network revenue management problem, for which approximation algorithms have been proposed (see for instance \cite{gallego2015online}, \cite{Kunnumkal}, \cite{MEISSNER2012459} and \cite{Ma_2019}). A few recent works focus on a setting with uncertainty on the future customers' types (see for instance \cite{Golrezaei}, \cite{Ma2018}, \cite{Chen2016AssortmentPF} and \cite{Bernstein2015}). In particular, the inventory balancing policy of \cite{Golrezaei} achieves a tight $1-1/e $ approximation guarantee for the adversarial arrival model and a 3/4-approximation guarantee for the i.i.d. arrival model in the large-inventory regime, where $T/\min q_i = k$ for some positive integer $k$. However, in our setting, we do not observe the customer's 'type', but need to learn it over the $T$ periods. Therefore, the problem we consider combines aspects of the MNL-Bandit with online assortments. 

We would like to comment on our regret bound in the light of known results in the online resource allocation framework, which considers the problem of matching an online sequence of requests to some budgeted agents. A key parameter in these works is the ratio $\gamma$ between the value of a single request and the agents' budgets. This ratio is usually called the {\em bid to budget ratio} in reference to the Adwords problem (\cite{Mehta2007}). When $\gamma$ is assumed to be small, a $(1-\epsilon)$ competitive ratio can be achieved in the stochastic arrival setting (\cite{Devanur2009}, \cite{Devanur2019}, \cite{Agrawal2014}). In particular, \cite{Devanur2019} recently obtained a $1-\epsilon$ approximation algorithm when $\gamma = O(\log(n/\epsilon)/\epsilon^2)$ and showed that this upper bound on $\gamma$ is almost optimal. In our setting, by interpreting $\frac{\|\boldsymbol{v}\|_2 r_{\sf opt} + \|\boldsymbol{v}\cdot\boldsymbol{r}\|_2}{\sqrt{N}}$ as metric for the bids and $r_{\sf inv}$ as a metric for the budget, we can see the large-inventory assumption for which we achieve a $\widetilde O(\sqrt{NT})$ regret as an analogue of the small-bid assumption in the resource allocation framework. It is not known whether the dependency of our regret bound in $\boldsymbol{v}$, $\boldsymbol{q}$ and $\boldsymbol{r}$ is tight.

\vspace{0.2cm}
\noindent\textbf{Budgeted bandits.} Another closely related stream of literature is the budgeted bandits problem. This is an extension of the MAB problem where pulling an arm also generates a cost and the agent is subject to a budget constraint (see for instance \cite{Tran2012}, \cite{Antos2008} and \cite{bubeck2010pure}). In the case of random costs, this problem is referred to as the Bandits with Knapsacks problem (first introduced in \cite{Badanidiyuru}). This problem originated from the study of dynamic pricing with limited supply (see for instance \cite{BesbesZ09}), which can be cast as as special case of BwK. \cite{Agrawal_global_bandits} recently showed that a natural extension of the UCB family of algorithms achieves a near-optimal regret for a variant of BwK with more general resource constraints. Our work can be interpreted in this framework by considering each assortment as an independent arm.  However, as mentioned above, a direct application of the techniques from \cite{Badanidiyuru} and \cite{Agrawal_global_bandits} in our setting would lead to a computational complexity and a regret bound both linear in the number of assortments, which is exponentially large.

\vspace{0.2cm}
\noindent\textbf{Combinatorial bandits and combinatorial semi-bandits with knapsacks.} Our problem is also closely related to the combinatorial bandits literature (see for instance \cite{Kveton}, \cite{Chen_combinatorial} and \cite{Qin_combinatorial}), where in each period, the agent needs to decide on a subset of arms to pull (called a superarm), and obtains a rewards which is a function of the individual arms' rewards. Closest to our work is the combinatorial bandits with knapsacks problem, which is a generalization of combinatorial bandits that incorporates resource constraints in a similar way to the BwK problem. This problem has been studied in \cite{CSBK} and mentioned as an application of the more general results of \cite{ImmorlicaSSS19} and \cite{AdvancesInBwK}. Our problem can be translated to this framework, by interpreting each assortment as a superarm. However, in our setting, the individual arms' rewards depend on all the arms present in the subset offered, whereas the rewards are assumed independent in the combinatorial bandits framework. Furthermore, the revenue generated by an assortment is not even monotonic, as introducing new products in the assortment may reduce the probability of purchasing the most profitable items and may thus impair the expected revenue. Hence, we cannot directly use the existing combinatorial bandits algorithms to obtain a low regret policy.
\section{Dynamic assortment policy}

\subsection{Preliminaries}
\label{sec:preliminaries}
\vspace{0.2cm}

\noindent\textbf{The assortment model.} An instance of the contrained Multinomial logit Bandits with Knapsacks (MNL-BwK) is defined by the parameters  $(T, N, \mathcal{S}, \boldsymbol{v}, \boldsymbol{q}, \boldsymbol{r})$ where $T$ is the length of the time horizon, $N$ is the number of products, $\mathcal{S} \subset [N]$ is the constrained set of possible assortments. $\boldsymbol{v}$ is the choice model's parameter vector, $T \boldsymbol{q} \geq 0$ is the capacity vector, $\boldsymbol{r}$ is the price vector. Dividing all the parameters with $v_0$ does not change the probabilities, and we can assume without loss of generality that $v_0 = 1$, this way without ambiguity we denote $\boldsymbol{v} = (v_1, \ldots, v_N) \in \mathbb{R}_+^{N}$. 

For simplicity of exposition, we restrict our attention to the case where the most likely outcome is the no-purchase option. This is equivalent to the following assumption on $\boldsymbol{v}$:
\vspace{-2mm}
\begin{assumption}\label{assumption:vmax}
$\|\{v_i\}_{i \in [N]}\|_{\infty} = v_{\max}\leq v_0 = 1$.
\end{assumption}

\vspace{-2mm}
The above assumption is fairly general and implies that the probability of no-purchase is at least as high as the probability of purchasing any other product. This is particularly true in the setting of online retail where conversion rates (ratios of sales to impressions) are small. This assumption can be relaxed using \cite{DBLP:journals/corr/AgrawalAGZ17a}.  We would like to note that unlike \cite{Cheung_assort_inventory}, we do not require any knowledge of a lower bound on the utilities $\{v_i\}_{i \in [N]}$. 

Since each product can be purchased at most $T$ times, without any loss of generality we can bound the capacity $T \boldsymbol{q}$ by $T$, or equivalently $\boldsymbol{q} \in [0, 1]^N$. This way, $q_i = 0$ means that product $i$ is not available, while $q_i = 1$ means that product $i$ has non-binding capacity. 
At each time $t \in [T]$, the seller offers assortment $S_t \in \mathcal{S} \subset [N]$, and the customer makes a decision $c_t \in S_t \cup \{0\}$ distributed according to $\pi(i, S_t)$. In the case where $c_t \neq 0$, the customer pays $r_{c_t}$. In the case where $c_t = 0$, the customer does not receive any payment. $\mathcal{S}$ represents the constrained set of possible assortments. 

The only formal assumption we make on $\mathcal{S}$ is that the time-independent offline LP relaxation (introduced in the following paragraph) can be solved efficiently. This assumption is minimal to our problem since it is computationally as hard as the LP relaxation version. This assumption encapsulates several naturally arising constraints over the assortments that the seller can offer. These include cardinality constraints (where there is an upper bound on the number of products that can be offered in the assortment), partition matroid constraints (where the products are partitioned into segments and the seller can select at most a specified number of products from each segment) and joint display and assortment constraints (where the seller needs to decide both the assortment as well as the display segment of each product in the assortment and there is an upper bound on the number of products in each display segment). In general, it encapsulates all $\mathcal{S}$ satisfying
\begin{equation*}
    \mathcal{S} = \{S \subset \{1, \ldots, N\} | \boldsymbol{A} \boldsymbol{x}(S) \leq \boldsymbol{b}, \boldsymbol{0} \leq \boldsymbol{x} \leq \boldsymbol{1}\} 
\end{equation*}
where $x_i(S) = 1$ is and only if $i \in S$. $\boldsymbol{A}$ is a Totally Unimodular matrix, and $\boldsymbol{b}$ is an integral vector. This way of constraining assortments offers a rich class of planning problems including the examples discussed above. \cite{davis2013assortment} provides a detailed discussion on these problems. 
\vspace{0.2cm}

\noindent\textbf{Benchmark.} We are interested in non-anticipative policies where the set offered at time $t$ only depends on the history up to time $t$: $\{S_{s}, c_{s}\}_{s = 1, \ldots, t-1})$. We compare ourselves to a clairvoyant policy which has access to the true parameter $\boldsymbol{v}$ (but not the choice realizations of the customers). Consider the following time-independent linear program:
\begin{align*}\label{opt:exponential}
    {\text{LP$(\pi)$}}\qquad r_{\sf opt}(\pi, \boldsymbol{r}, \boldsymbol{q})= \text{max }& \sumS y(S)\cdot R(S|\pi) \\
    \text{s.t }&\sumS y(S) \cdot \pi(i,S) \leq q_i, \; \; \; \forall i = 1, \ldots, N \\ &\boldsymbol{y} \geq 0 \; \;, \; \sumS y_S \leq 1.
\end{align*}
Following the discussion in the previous paragraph on the feasible assortments $\mathcal{S}$, we formally make the following assumption on $\mathcal{S}$:
\begin{assumption}\label{assumption:constraints}
For any given $\tilde{\pi}$, the program \text{LP$(\tilde{\pi})$} can be solved in polynomial time in $N$.
\end{assumption}
LP$(\pi)$ is a linear program with exponentially many variables, and a linear number (in $N$) of constraints. Consequently, whether Assumption \ref{assumption:constraints} holds depends on $\mathcal{S}$ only. $\text{LP$(\pi)$}$ is also time-independent. Lemma \ref{lem:OPT-LP} provides an upper bound on the expected revenue of any feasible dynamic assortment policy, formalized below.
\begin{lemma}
\label{lem:OPT-LP}
For any non-anticipative policy $\mathcal{P}$, $\mathbb{E}_{\mathcal{P}}\sum_{t = 1}^T r_{c_t} \leq T r_{\sf opt}$.
\end{lemma}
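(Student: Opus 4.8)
The plan is to exhibit an explicit feasible solution of $\LP$ whose objective value equals $\E(\mathcal{A})/T$, so that optimality of $\text{OPT}^{\text{LP}}$ immediately yields the bound. Fix any non-anticipative policy $\mathcal{A}$, and for $t = 1, \ldots, T$ let $S_t$ denote the (random) assortment offered at period $t$ and $c_t \in S_t \cup \{0\}$ the customer's choice, with the convention $r_0 = 0$. I would define the averaged play distribution
\[
\bar y_S = \frac{1}{T}\sumt \prob(S_t = S), \qquad S \in \mathcal{S}.
\]
First I would check that $\bar y$ is a probability distribution over $\mathcal{S}$: it is nonnegative by construction, and since exactly one assortment (possibly empty) is offered each period, $\sumS \prob(S_t = S) = 1$ for every $t$, hence $\sumS \bar y_S = 1$.

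Next I would verify the objective and constraint identities, which are the crux of the reduction. Because $S_t$ is $\mathcal{F}_t$-measurable and the customers are i.i.d., conditioning on $\{S_t = S\}$ makes $c_t$ distributed according to the MNL model on $S$; the tower property then gives $\E[r_{c_t}] = \E[R(S_t)]$ and $\prob(c_t = i) = \E[p_i(S_t)]$ for each product $i$. Summing over $t$ yields
\[
\sumS \bar y_S R(S) = \frac{1}{T}\sumt \E[R(S_t)] = \frac{1}{T}\,\E\Big[\sumt r_{c_t}\Big] = \frac{1}{T}\,\E(\mathcal{A}),
\]
and likewise $\sumS \bar y_S\, p_i(S) = \frac{1}{T}\,\E\big[\sumt \mathbbm{1}(c_t = i)\big]$ for every $i$.

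The hard part — really the only place where the structure of the problem enters — is the knapsack constraint. Here I would invoke that $\mathcal{A}$ respects the hard inventory limits: a unit of product $i$ can be sold only while it is in stock, so the total consumption $\sumt \mathbbm{1}(c_t = i)$ is at most $q_i$ almost surely. Taking expectations gives $\sumS \bar y_S\, p_i(S) \leq q_i/T$, so all $N$ knapsack constraints hold and $\bar y$ is feasible for $\LP$. Feasibility together with the objective identity gives $\text{OPT}^{\text{LP}} \geq \sumS \bar y_S R(S) = \E(\mathcal{A})/T$, that is, $\E(\mathcal{A}) \leq T\cdot\text{OPT}^{\text{LP}} = \text{OPT}$, as claimed. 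I expect the only subtle point to be the careful justification of $\E[r_{c_t}] = \E[R(S_t)]$ via measurability of $S_t$ and the i.i.d. structure of the arrivals; the rest is a routine averaging argument standard in the bandits-with-knapsacks literature.
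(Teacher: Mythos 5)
Your proposal is correct and follows essentially the same argument as the paper: you construct the same averaged distribution $y_S = \frac{1}{T}\sum_{t=1}^T \prob(S_t = S)$, verify feasibility for $\LP$ using the almost-sure inventory constraint plus the tower property, and identify the objective value with $\mathbb{E}(\mathcal{A})/T$. No gaps; the measurability/i.i.d. point you flag is handled the same way in the paper's proof.
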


We give the proof of Lemma \ref{lem:OPT-LP} in  Appendix \ref{app:lemmas}. Let $y^{\sf OPT}$ be an optimal solution of $\text{LP$(\boldsymbol{v})$}$. Motivated by Lemma \ref{lem:OPT-LP}, we consider as a benchmark the algorithm which plays in each period $t$ a set $S_t$ drawn according to $y^{\sf OPT}$. For such an algorithm, $q^{\sf OPT}_i = \sum_{S \in \mathcal{S}} y^{\sf OPT}(S) \pi(i, S) \leq q_i$ represents the \textit{optimal consumption rate} of product $i$, and the corresponding unitary revenue is $r_{\sf opt}$, satisfying $r_{\sf opt} = \sum_{i \in [N]}r_i q_i^{\sf OPT}$. Note that the solution returned by this algorithm is feasible only in expectation, whereas the feasible policies in our setting are subject to hard inventory constraints. We would like to point out that this type of linear program has been often considered in the revenue management literature and is commonly referred to as the (choice-based) deterministic linear program \cite{Gallego04managingflexible}. This strong benchmark was for instance used by \cite{Golrezaei} in the setting of dynamic assortment optimization, and by \cite{Agrawal_global_bandits} and \cite{Badanidiyuru} in the more general context of bandits with knapsacks. 

Finally, we introduce \textit{the inventory's cost} $r_{\sf inv}$:
\begin{equation}\label{eq:norm}
    r_{\sf inv} = \sum_{i \in [N]}r_i \frac{\sum_{S \in \mathcal{S}}y^{\sf opt}(S)\pi(i, S) }{\sqrt{q_i}}= \sum_{i \in [N]}r_i \frac{q^{\sf OPT}_i}{\sqrt{q_i}}.
\end{equation}
A more detailed discussion on the intuition behind $r_{\sf inv}$ will be developed in the next section.

\medskip
\noindent\textbf{Objective.} The objective of the seller is to minimize its cumulative expected regret, which is expressed as:
\begin{equation*}
    \text{Regret}(T) = T r_{\sf opt} - \mathbb{E}
    \left[\sumt R(S_t)\right].
\end{equation*}

\subsection{Our algorithm: MNLwK-UCB}
\label{sec:policy_description}
\vspace{0.2cm}

We first give an outline of our dynamic assortment policy, which we name MNLwk-UCB. We divide the time horizon into epochs, where each epoch $\ell$ is composed of $\mathcal{E}_{\ell}$ consecutive time steps. At the beginning of each epoch $\ell$, we decide on the assortment $S_{\ell}$ to offer to arriving customers repeatedly until we observe a no-purchase. This stopping rule is based on~\cite{DBLP:journals/corr/AgrawalAGZ17a} and allows to obtain unbiased estimators of the unknown MNL parameters. In particular, the average number of times each product has been purchased in an epoch is an unbiased estimator of the MNL parameter corresponding to the product. Based on these estimators, we construct at the beginning of each epoch $\ell$ confidence bounds on the utilities, from which we derive upper confidence bounds on the choice probability $\pi$ for each product $i$ and assortment $S \in \mathcal{S}$. We then compute a distribution $\{y_{S,\ell}\}_{S\subseteq [N]}$ and sample the assortment $S_{\ell}$ for epoch $\ell$ based on the distribution. Note that the sampling of $S_{\ell}$ is done only once, at the beginning of each epoch $\ell$. Naively computing the distribution $\{y_{S,\ell}\}_{S\subseteq [N]}$ and sampling from it may have exponential complexity. However, due to the structure of the LP we solve, this can be done efficiently. 

We denote by $q^t_i \in [0,1]$ the remaining inventory fraction for each product $i$ at time $t$. After observing the purchase decision $c_t$ of the customer, we decrease the inventory of $c_t$ (corresponding to $T q_i^t$) by one unit. The algorithm stops as soon as any of the products $i$ is out of stock (i.e. $q^t_{i} = 0$). This condition ensures that the algorithm is always feasible. Even though it seems somewhat restrictive to stop as soon as one product depletes (as the seller might be tempted to use the remaining time steps to sell other available products) we show that the associated cost is negligible. In fact, with high probability, no product is depleted before the end of the time horizon.

Note that the procedure described above is not yet a valid algorithm as it involves possibly two computational bottlenecks. The first one being solving an LP with exponentially many variables. The second one being computing and sampling from a distribution with possibly exponentially sized support. Both of these issues are simultaneously tackled. We present a computationally efficient algorithm which returns an optimal distribution without having to  explicitly use the lower and upper bounds for each set $S$. Furthermore, the distribution returned by MNLwK-UCB has a linear size support and therefore, we can sample efficiently from it. 

We are now ready to describe the details of the algorithm.

\noindent\textbf{Confidence bounds for $\boldsymbol{v}$.} We now present the details on the construction of the utility estimator, and its high probability confidence bounds. We would like to mention that we do not follow the classical approach to design  maximum likelihood estimators (which has been considered for instance by \cite{Cheung_assort_inventory} in their dynamic assortment policy under inventory constraints). The main limitation of the MLE approach is that it requires the knowledge of a lower bound on the values of the true utilities, which may not be available in practical applications. In order to avoid this restrictive assumption, we choose to design our confidence intervals based on the sampling method presented in \cite{DBLP:journals/corr/AgrawalAGZ17a}. We describe it here for the convenience of the reader. 

We define $n_{i,\ell}$ the number of customers who purchased $i$ in epoch $\ell$, for all $i \in S_\ell$:
\begin{equation*}
    n_{i,\ell} = \sum_{t \in \epochl} \mathbbm{1}(c_t = i),
\end{equation*}
and we define $N_i(\ell)$, the number of times product $i$ has been purchased up to the end of epoch $\ell$:
\begin{equation*}
    N_{i}(\ell) = \sum_{\tau \leq \ell}n_{i, \tau} = \sum_{\tau \in \mathcal{T}_i(\ell)} n_{i, \tau},
\end{equation*}
where $\mathcal{T}_i(\ell) = \left\{\tau \leq \ell \, \middle| \, i \in S_\ell\right\}\text { and } T_i(\ell)=|\mathcal{T}_i(\ell)|$ denote respectively the set of epochs and the number of epochs which have offered product $i$ so far. We then compute $\tilde{v}_{i,\ell}$ the average number of times each product $i$ is purchased per epoch:
\begin{equation*}
    \tilde{v}_{i,\ell}= \frac{N_i(\ell)}{T_i(\ell)} = \frac{1}{T_i(\ell)} \cdot \sum_{\tau \in \mathcal{T}_i(\ell)} n_{i,\tau},
\end{equation*}
\noindent At the end of epoch $\ell$, we set:
\begin{equation*}
    \vucb_{i,\ell} = \min\left\{\tilde{v}_{i,\ell} + \Bigg( \displaystyle \sqrt{\tilde{v}_{i,\ell}\frac{48\log{({\sqrt{N}}\ell+1)}}{T_i(\ell)}} + \frac{48\log{({\sqrt{N}}\ell+1)}}{T_i(\ell)} \Bigg), 1\right\},
\end{equation*}
\begin{equation*}
\vlcb_{i,\ell} = \max\left\{\tilde{v}_{i,\ell} - \Bigg( \displaystyle \sqrt{\tilde{v}_{i,\ell}\frac{48\log{({\sqrt{N}}\ell+1)}}{T_i(\ell)}} + \frac{48\log{({\sqrt{N}}\ell+1)}}{T_i(\ell)} \Bigg), 0\right\}.
\end{equation*}
\noindent We will show that with high probability, $v_{i} \in [\vlcb_{i, \ell}, \vucb_{i, \ell}]$.

\noindent\textbf{Optimistic assortments.}  The previous confidence bounds induces the following natural upper confidence bound for the choice function $\pi$:
\begin{equation}\label{piucb}
    \pi^{\sf UCB}_\ell(i, S) := \frac{\vucb_{i, \ell}}{1 + \sum_{j \in S} \vlcb_{j, \ell}}\mathbbm{1}(i \in S).
\end{equation}
In epoch $\ell$, MNLwK-UCB computes the distribution over sets which maximizes revenue for the estimates of the purchase probabilities. More precisely, the distribution $\{y_{\ell}(S)\}_{S \in \mathcal{S}}$ played in epoch $\ell$ has a linear support that is known to the seller and is a solution of the following program:
\begin{align*}\label{opt:exponential}
\text{LP$(\pi^{\sf UCB}_{\ell-1})$}:\qquad \text{max }& \sumS y_S \sum_{i \in [N]} r_i  \cdot \pi^{\sf UCB}_{\ell - 1}(i, S) \\
    \text{s.t }&\sumS y_S \cdot \pi^{\sf UCB}_{\ell - 1}(i, S) \leq (1 - \omega_{i, \ell - 1}) q_i \; \; \; \forall i = 1, \ldots, N \\ &y \geq 0 \; \;, \; \sumS y_S \leq 1,
\end{align*}
\noindent where $\boldsymbol{\omega}>0$ is a shrinkage factor. Adding the shrinkage factor $\boldsymbol{\omega}$ ensures that stock-out does not happen with high probability, while accounting for the different errors possibly induced in this approximation in order to respect the hard inventory constraints (from discrete to fluid over both assortment model and the randomization over offered assortments, and the error resulting from mis-specifying $\boldsymbol{v}$). $\boldsymbol{\omega}$'s expression is derived naturally from the regret analysis:
\begin{equation}\label{eq:omega}
    \forall i \in [N], \quad \ell \geq 0, \qquad \omega_{i} =  \frac{A_0}{q_i T} + \frac{A_1}{\sqrt{q_i T}},
\end{equation}
where $A_0$, and $A_1 = \Tilde{O}(1)$ are numerical constants times polylogarithmic factors in $T$. The exact expressions of $A_0$ and $A_1$ are explicited in the regret analysis section. For clarity of exposition, we assume LP$(\pi^{\sf UCB}_{\ell - 1})$ to be feasible, which is equivalent to making the following additional assumption on the minimal size of the inventory:

\begin{assumption}\label{assumption:qmin}
    $\max_{i} \omega_{i} < 1$.
\end{assumption}
Assumption \ref{assumption:qmin} is essentially equivalent to $q_{\min}^{-1} \lesssim T$, which is the same as saying that the inventory has at least a polylogarithmic growth in $T$. It easily relaxed by noticing the following: Assume there is a product $i$ which grows sub-logarithmically. Disregarding such a product will have minimal impact on the final regret. Stronger assumptions on $q_{\min}$ are made in both \cite{miao2021general} ($q_{\min}^{-1} \lesssim 1$) and \cite{Cheung_assort_inventory} ($q_{\min}^{-1} \lesssim T^{1/3}$).

The set $S_{\ell}$ played throughout epoch $\ell$ is then drawn according to the distribution $\{y_{\ell}(S)\}$. It is worth to recall that the procedure by which we derive $y_\ell$ gives a distribution with linear support in $N$ that is known to the seller. As a consequence, all samplings are done tractably and efficiently.

Note that by interpreting each assortment $S$ as an independent arm in the multi-arm bandits framework, the program $\text{LP$(\pi^{\sf UCB}_\ell)$}$ plays the same role as the LP used by the UCB-based algorithm presented in \cite{Agrawal_global_bandits} for the Bandits with Knapsacks (BwK) problem. The major difference is that in our setting, problem $\text{LP$(\pi^{\sf UCB}_\ell)$}$ involves an exponential number of variables and therefore it cannot be directly solved. However, we still show that $\text{LP$(\pi^{\sf UCB}_\ell)$}$ can be solved efficiently (linearly in $T$, and polynomial in $N$). The details of MNLwK-UCB are described in Algorithm \ref{main_algo}.

We are now ready to present our regret bound.
\vspace{-0.1cm}
\begin{theorem}\label{main_theorem}
Under assumptions $\ref{assumption:vmax}, \text{ }\ref{assumption:constraints},\text{and }\ref{assumption:qmin}$, Algorithm \ref{main_algo} achieves a regret bound of
\begin{equation*}
\text{Regret}_T \leq B_0 N r_{max}+ B_1 \left(r_{\sf inv} + \|\boldsymbol{v}\|_2 r_{\sf opt} + \|\boldsymbol{v}\cdot\boldsymbol{r}\|_2\right)\sqrt{T},
\end{equation*}
where $B_0, B_1 = \Tilde{O}(1)$ are numerical constants multiplied by polylog factors in $N, T$.
\end{theorem}

\noindent Assumption \ref{assumption:vmax} makes the no-purchase option as the most likely outcome given any assortment. It can be relaxed by adapting ideas from \cite{DBLP:journals/corr/AgrawalAGZ17a} by adding an initial exploratory phase that ensures that all products are selected enough times, and one should expect a more general bound that scales with $\max\left(\sqrt{\boldsymbol{v}}, \boldsymbol{v}\right)$. Its purpose is solely to maintain the focus on how the inventory is managed and alleviate any non-central technicalities. Assumption \ref{assumption:constraints} is the tractability of the corresponding LP relaxation and seems to be minimal and necessary, since for a given constrained set $\mathcal{S}$, one should expect the corresponding online MNL-BwK problem to be (computationally) at least as hard as its time independent LP relaxation. Moreover, Assumption \ref{assumption:constraints} is satisfied for most usual cases (cardinality constraints, matroid constraints, ...). Assumption \ref{assumption:qmin} is about the minimal size of the total inventory, and is also made for simplicity of exposition, as it can be relaxed easily by disregarding the products with sub-logarithmic growth.

We now parse the bound in Theorem \ref{main_theorem}. First, the regret bound scales linearly in the revenue vector $\boldsymbol{r}$. Second, the growth of the $\sqrt{T}$ coefficient is $\left(r_{\sf inv} + \|\boldsymbol{v}\|_2 r_{\sf opt} + \|\boldsymbol{v}\cdot\boldsymbol{r}\|_2\right)$. This term balances between the availability of the products and their utilities, and can therefore be seen as analog to the \textit{bid-to-budget} trade-off mentioned in \cite{Mehta2007}, where $r_{\sf inv}$ measures the budget, and $\|\boldsymbol{v}\|_2 r_{\sf opt} + \|\boldsymbol{v}\cdot\boldsymbol{r}\|_2$ measures the bids. Finally, we show that this bound stays in $\Tilde{O}(\sqrt{NT})$ regardless of the size of the inventory, and the complexity of the constraints set $\mathcal{S}$. In particular, when $\mathcal{S}$ represents cardinality constraints of size $K$, our regret bound does not depend on $K$ in the large inventory setting. A more detailed discussion on the role of the inventory in the regret bound is detailed in the next section.
\begin{algorithm}

\begin{algorithmic}
\STATE \textbf{Initialization:} $\epsilon = \frac{1}{T}, \vucb_{i,0} = \vmax, \vlcb_{i, 0} = 0$, 
for all $i=1,\ldots,N$, $c_0 = 0$
\STATE $\ell = 1$ ; keeps track of the total number of epochs.
\STATE $q^0 = q$ ; \text{keeps track of the remaining inventory.}

\For{$t = 1, \ldots, T$}{
    \If{$c_{t-1} = 0$}{
        Sample $S_\ell$ according to the distribution $y_\ell$, where $y_\ell$ is an optimal solution to $\text{LP$(\pi^{\sf UCB}_{\ell - 1})$}$\;
    }
	Offer assortment $S_{\ell}$ and observe the purchase decision ${c_t}$ of the customer \;
	\If{$c_t = 0$ \text{ (i.e., the decision was a no purchase)}}{

		Compute $n_{i,\ell} = \sum_{t \in \epochl} \mathbbm{1}(c_t = i)$, number of consumers who preferred $i$ in epoch $\ell$, for all $i \in S_\ell$ \;
		Update $\mathcal{T}_i(\ell) = \left\{\tau \leq \ell \, \middle| \, i \in S_\ell\right\}, T_i(\ell)=|\mathcal{T}_i(\ell)|$, set and number of epochs until $\ell$ that offered  product $i$ \;
        Update $N_i(\ell) = N_i(\ell) + n_i(\ell)$, the number of times product $i$ has been purchased until epoch $\ell$. \;
		Update $\tilde{v}_{i,\ell}$ = $\displaystyle \frac{N_i(\ell)}{T_i(\ell)}$, sample mean of the estimates \;

		$\ell = \ell + 1$	\;
	}
	\Else{
		Add time $t$ to epoch $\epochl$ and set $q^{t+1}_{c_t} = q^{t}_{c_t} - 1$ and $q^{t+1}_{i} = q^{t}_{i}$ for all $i\in [N]\setminus c_t$ \;
		
		\If{$q^{t+1}_{c_t} = 0$}{
		Stop the algorithm.
		}
    }
}
\end{algorithmic}
\caption{MNLwK-UCB.}
\label{main_algo}
\end{algorithm}

\subsection{On the cost of finite inventory:}
We delve more on the role of the finite inventory in the regret bound. First, Assumption \ref{assumption:qmin} establishes the upper bound $q_{\min}^{-1} \lesssim T$. We would like to emphasize that,  as opposed to similar works, this assumption is not a "large inventory" assumption, and is only made for clarity of exposition.

We will study the impact of this assumption for different rates $\Theta(T^{\alpha})$, $\alpha \in [0, 1]$. Following Theorem \ref{main_theorem}, this impact will be controlled by the term $r_{\sf inv}$. We recall its expression stated in \eqref{eq:norm}:
\begin{equation*}
    r_{\sf inv} = \sum_{i \in [N]} r_i \frac{q_i^{\sf OPT}}{\sqrt{q_i}}.
\end{equation*}

$r_{\sf inv}$ represents the \text{effective} cost of having a finite inventory. To parse $r_{\sf inv}$, recall that for each product $i$, $q_i$ represents the available capacity, and $q_i^{\sf OPT}$ represents the optimal consumption rate. $\frac{q_i^{\sf OPT}}{q_i}$ represents the optimal saturation rate of product $i$. For the particular case of no inventory constraints, $\boldsymbol{q} = 1$, and we have
\begin{equation*}
    r_{\sf inv} = \sum_{i \in [N]} r_i q_i^{\sf OPT} = r_{\sf opt} \leq r_{\max},
\end{equation*}
plugging-in this inequality in Theorem \ref{main_theorem} recovers a regret bound of $\Tilde{O}(N + \sqrt{NT})$. This discussion can be extended to the case where the inventory grows quasi-linearly in $T$, and we would still have:
\begin{equation}\label{eq:rinv}
    r_{\sf inv} \leq\frac{r_{\max}}{q_{\min}},
\end{equation}
which would still yield a regret bound in $\Tilde{O}\left(N + \sqrt{NT}\right)$. This assumption of quasi-linear growth is reasonable and represents, for instance, a stationary state with fixed replenishing rates. It can be found in \cite{miao2021general}, from which the authors derive a $\Tilde{O}(\sqrt{NT} + NK^{5/2})$ regret bound. Assuming a quasi-linear growth of the inventory is equivalent to assuming that $q_{\min}^{-1}$ is bounded independently from $T$, up to polylog factors in $T$ and $N$ (i.e., $q^{-1}_{\min} = \Tilde{O}(1)$).

The power of our approach is that it extends beyond the large inventory assumption. The main limitation in current works is a regret bound that scales in $q_{\min}^{-1}$ (See \cite{Cheung_assort_inventory}, \cite{miao2021general}, where the authors base their regret bound on the size of $q_{\min}$). Following Inequality \eqref{eq:rinv}, we see that the impact of $r_{\sf inv}$ nuances the idea that the smallest inventory should be the main driver for its cost. We argue that it is not only the smallest inventory that affects the regret, but the \textit{overall saturation} of the capacity constraints in the optimal fluid regime. One gain from this improvement is that $r_{\sf inv}$ is well-behaved in the limiting cases, i.e., the cost of the inventory does not go to infinity as a product becomes rarer (as opposed to existing bounds in $q_{\min}^{-1}$). Moreover, due to the knapsack structure of LP$(\pi)$, in many cases $r_{\sf inv}$ has a simple structure, and can be bounded by above as:
\begin{equation}\label{eq:hqrnorm}
    r_{\sf inv} = \sum_{i \in [N]} r_i \frac{q_i^{\sf OPT}}{\sqrt{q_i}} \leq \sum_{i \in [N]} r_i \sqrt{q_i^{\sf OPT}} \leq N r_{\max} \sqrt{q_{\max}}.
\end{equation}

Hence, when $\boldsymbol{q}$ grows sub-linearly in $T$ (i.e., $q_{\min}^{-1} = \Theta(T^{\alpha})$ where $\alpha < 1$), we have from Inequality \ref{eq:hqrnorm}: $r_{\sf inv} \leq N r_{\max} T^{\frac{-\alpha}{2}}$, which when plugged in the regret bound in Theorem \ref{main_theorem} yields a regret bound of $\Tilde{O}\left(N\left(1 + T^{\frac{1 - \alpha}{2}}\right) + \sqrt{NT}\right)$. As a comparison with existing work in the literature, \cite{miao2021general} make the linear growth assumption $ q_{\min}^{-1} = O(1)$ for a $\Tilde{O}(\sqrt{NT} + N K^{5/2})$ bound, and do not provide a regret bound under the regime $\Theta(T^{\alpha})$. \cite{Cheung_assort_inventory} assume that $q_{\min}^{-1} \lesssim T^{1/3}$, from which they design a tractable algorithm for a regret bound of $\Tilde{O}(K^{2/3}N^{1/3}T^{2/3})$. They also design an (intractable) algorithm which assumes that $q_{\min}^{-1} \lesssim \sqrt{T}$, and yields a regret bound of $\Tilde{O}\left(K^{5/2} N \sqrt{T}\right)$. 
 
\subsection{Time complexity of Algorithm \ref{main_algo}}
We briefly discuss the time complexity of Algorithm \ref{main_algo}. In particular, we show that it is linear in $T$ and polynomial in $N$. Before the beginning of epoch $\ell \geq 1$, the seller solves the program $\text{LP$(\pi^{\sf UCB}_{\ell})$}$:
\begin{align*}
\text{LP$(\pi^{\sf UCB}_{\ell-1})$}:\qquad \text{max }& \sumS y_S \sum_{i \in [N]} r_i \pi^{\sf UCB}_{\ell-1}(i, S) \\
    \text{s.t }&\sumS y_S \pi^{\sf UCB}_{\ell-1}(i, S) \leq (1 - \omega_{i, \ell-1}) q_i \; \; \; \forall i = 1, \ldots, N \\ &y \geq 0 \; \;, \; \sumS y_S \leq 1.
\end{align*}
For any exact solver $\sf A$ of $\text{LP$(\pi^{\sf UCB}_{\ell})$}$, we denote $\mathcal{C}_{\ell}^{\sf A}$ the time complexity for sampling $S_\ell \sim y_\ell$ solution to LP$(\pi^{\sf UCB}_\ell)$. 
and $\mathcal{C}^{\sf A} := \max_{\ell \geq 1} \mathcal{C}_{\ell}^{\sf A}$. Within each epoch $\ell$, Algorithm \ref{main_algo} updates its variables in constant time, hence:
\begin{equation}\label{eq:timecomplexity}
    \mathcal{C}^{\sf ALG} \lesssim \sum_{\ell \geq 1} \mathcal{C}_{\ell}^{\sf A} \leq \max_{\ell \geq 1} \mathcal{C}_{\ell}^{\sf A}\sum_{\ell \geq 1} 1 \leq T\mathcal{C}^{\sf A}.
\end{equation}
Since the size of the program P$(\pi^{\sf UCB}_\ell)$ is only a function of $N$ and $\mathcal{S}$, so is $\mathcal{C}^{\sf A}$. In particular, $\mathcal{C}^{\sf A}$ is not a function of time $T$, and Equation \eqref{eq:timecomplexity} guarantees that Algorithm \ref{main_algo} is at most linear in $T$. By Assumption \ref{assumption:constraints}, $\mathcal{C}^{\sf A}$ is polynomial in $N$ (and $N$ only), therefore Algorithm \ref{main_algo} can be solved efficiently.

For most practical cases on $\mathcal{S}$ discussed in the previous section, $\text{LP$(\Tilde{\pi})$}$ has an exact separation oracle. Consequently, producing efficiently an optimal solution with known linear support in $N$ for \text{LP$(\Tilde{\pi})$} has already been studied (See \cite{liu2008choice}). The ellipsoid method guarantees a polynomial time in $N$ for all the problem instances, while in practice, column generation \cite{desaulniers2006column} proves to be empirically more efficient. 

Finally, it is worthy to note that $\mathcal{C}^{\sf A}$ does not necessarily increase with $K = \max_{S \in \mathcal{S}} |S|$. Even though it is linear in $N$ in subsets of small supports (i.e., assortments of $1$ element), it can also be linear in larger subsets of assortments: in the special case of no constraints $\mathcal{S} = [N]$, \cite{elmachtoub2022revenue} achieve a linear algorithm in $N$. This justifies the more general Assumption \ref{assumption:constraints}, that essentially decorrelates the computational efficiency of the LP relaxation from the learning aspect of the online problem.

\section{Regret analysis}
\label{sec:regret_analysis}
We devote the rest of the paper to expose the main ideas behind the proof of Theorem \ref{main_theorem}, leaving the details in the Appendix. 

Deriving the regret bound requires three major steps. First, we establish the structural properties of the estimators $\boldsymbol{\Tilde{v}}_\ell$, most of which derive from \cite{DBLP:journals/corr/AgrawalAGZ17a}. From there we will construct high confidence bounds for the consumption rate $\pi$. Second, we show that with high probability, none of the products is depleted before the end of the time horizon. This will be done by analyzing the consumption of a fictional algorithm that does not stop when a product is depleted, and is allowed to exceed the inventory capacity. The consumption of product $i$ at the end of time $T$ for such an algorithm will be denoted $\mathcal{I}_{i, T}$, and we will prove that with high probability that for each product $i$, $\mathcal{I}_{i, T} \leq T q_i $. The main consequence is that Algorithm \ref{main_algo} has a stopping time of exactly $T$ (with high probability). This is formalized in Proposition \ref{proposition:consumption}. Third, we show that conditionally on this high probability event, the sequence of assortments successively offered by MNLwK-UCB generates a high enough revenue. 

At the heart of these three steps lies a refined analysis of $\mathcal{I}_{i, T}$. In particular, we decompose the exact inventory consumption $\mathcal{I}_{i, T}$ into a main term derived from Algorithm \ref{main_algo} and three independent noise terms. The first noise term is the cost of the seller offering a randomized assortment, the second noise term is the cost of the customer making a randomized purchase, and the third term is the mis-specification cost in the offered assortments. The first two noise terms are results of both players (the seller and the customer) randomizing their decisions in a discrete repeated game, and can be analyzed without leveraging many structural properties of Algorithm \ref{main_algo}. The third noise term captures the lack of knowledge of $\boldsymbol{v}$ and the learning aspect of the problem. This term will consequentely be the main driver of the regret. Naturally, Algorithm \ref{proposition:consumption} is tuned so that this term is minimal.

Before delving into the details of each step, we would like to remind the reader of some essential properties of the epoch lengths $\{|\mathcal{E}_\ell|\}_{\ell \geq 1}$ and the estimators $\{\tilde{v}_{i,\ell}\}_{i \in [N], \ell \geq 1}$. These statements are already established in \cite{DBLP:journals/corr/AgrawalAGZ17a}.

\begin{lemma}[\cite{DBLP:journals/corr/AgrawalAGZ17a}]
\label{lemma:essential_properties}
The following properties hold:
\begin{enumerate}[label=(\roman*)]
    \item Conditionally on $S_{\ell}$, the length of the $\ell^{\sf th}$ epoch $|\epochl|$ is a geometric random variable with parameter $\frac{1}{1+\sum_{i\in S_{\ell}}v_i}$.
    \item Conditionally on both $S_\ell$ and $|\mathcal{E}_\ell|$, $n_i(\ell)$ is a Binomial random variable with parameter $(|\mathcal{E}_\ell|, \pi(i, S_\ell|\boldsymbol{v}))$.
    \item For every item $i$ and epoch $\ell$, the estimator $\tilde{v}_{i,\ell}$ is unbiased.

\end{enumerate}
\end{lemma}
\subsection{High probability confidence bounds for $v$ and $\pi$}
We remind the reader of the expressions of $\{\vucb_{i,\ell}\}$, $\{\vlcb_{i,\ell}\}$ for both $v$ and $\Tilde{v}_\ell$, which construction follows the distributional properties established in Lemma \ref{lemma:essential_properties}:
\begin{equation*}
    \vucb_{i,\ell} = \min\left\{\bar{v}_{i,\ell} + \Bigg( \displaystyle \sqrt{\bar{v}_{i,\ell}\frac{48\log{({\sqrt{N}}T^2+1)}}{T_i(\ell)}} + \frac{48\log{({\sqrt{N}}T^2+1)}}{T_i(\ell)} \Bigg),1\right\},
\end{equation*}
\begin{equation*}
\vlcb_{i,\ell} = \max\left\{\bar{v}_{i,\ell} - \Bigg( \displaystyle \sqrt{\bar{v}_{i,\ell}\frac{48\log{({\sqrt{N}}T^2+1)}}{T_i(\ell)}} + \frac{48\log{({\sqrt{N}}T^2+1)}}{T_i(\ell)} \Bigg), 0\right\}.
\end{equation*}
The high probability bounding $v \in [\vlcb, \vucb]$ is formally stated in the following lemma:
\begin{lemma}[Confidence bounds for $\boldsymbol{v}$]\label{lemma:vcbw}
    The following properties hold
    \begin{enumerate}[label=(\roman*)]
    \item For every item $i$ and epoch $\ell$, $v_i\in [\vlcb_{i,\ell}, \vucb_{i,\ell}]$ with probability at least $1- \tfrac{6}{NT^2}$.
    \item There exist some universal constants $C_1$ and $C_2$ such that for all $i\in [N]$, $\ell\in [L]$, 
    \begin{equation*}
       \vucb_{i, \ell} - \vlcb_{i, \ell} \leq C_1 \sqrt{\frac{v_i\log{({\sqrt{N}}T^2+1)}}{T_i(\ell)}} + C_2\left(\frac{\log{({\sqrt{N}}T^2+1)}}{T_i(\ell)}\right),
    \end{equation*}
    with probability at least $1-\frac{7}{NT^2}$.
\end{enumerate}
\end{lemma}
While property $(i)$ shows that $\vucb_{i, \ell}$ and $\vlcb_{i, \ell}$ are indeed confidence bounds for the true parameters $v_i$, property $(ii)$ shows that we can bound the width of this confidence interval with the true parameters. To restate property $(ii)$ simply, we have
\begin{equation*}
   \vucb_{i, \ell} - \vlcb_{i, \ell} \lesssim 
    \frac{1}{\sqrt{T_i(\ell)}}
\end{equation*}
with high probability (where $\lesssim$ hides numerical constants, dependencies in $\boldsymbol{v}$, and $\log$ dependencies in $T, N$). Our regret analysis relies on the stronger property given in Lemma \ref{lemma:vcb}, which implies that the properties (i) and (ii) are simultaneously satisfied by all the estimators, for a smaller price in probability. Unlike in \cite{DBLP:journals/corr/AgrawalAGZ17a}, this stronger property is required to analyse the total product consumption and show that the hard inventory constraints are respected with high probability. The proof of Lemma \ref{lemma:vcb} is stated in Appendix \ref{app:vcb}.
\begin{lemma}\label{lemma:vcb}
For each $i \in [N]$ and $\ell \in [L]$, we introduce the following events:
\small{
\begin{align*}
    \mathcal{A}^{\sf CB1}_{i, \ell} &= \bigg\{v_i \in \left[\vlcb_{i, \ell},\vucb_{i, \ell}\right]\bigg\}, \mathcal{A}^{\sf CB1} = \bigcap_{i \in [N]} \bigcap_{\ell \geq 1}  \mathcal{A}^{\sf CB1}_{i, \ell-1},
    \\
    \mathcal{A}^{\sf CB2}_{i, \ell} &:= \Bigg\{\left|\vucb_{i, \ell} - v_i\right| + \left|v_i - \vlcb_{i, \ell}\right| \leq C_1 \sqrt{\frac{v_i\log{({\sqrt{N}}T^2+1)}}{T_i(\ell)}} + C_2\Big(\frac{\log{({\sqrt{N}}T^2+1)}}{T_i(\ell)}\Big) \Bigg\},  \mathcal{A}^{\sf CB2} = \bigcap_{i \in [N]} \bigcap_{\ell \geq 1}  \mathcal{A}^{\sf CB2}_{i, \ell-1}.
\end{align*}
}
We have $\prob_\pi(\mathcal{A}^{\sf CB1} \cap \mathcal{A}^{\sf CB2}) \geq 1 - \frac{13}{T}$.
\end{lemma}
Following Lemma \ref{lemma:vcb}, it is easy to see that conditionally on $\mathcal{A}^{\sf CB1}$, $\pi^{\sf UCB} \geq \pi$:
\begin{equation*}
    \pi^{\sf UCB}_{\ell}(i, S)= \frac{\vucb_{i, \ell}}{1 + \sum_{j \in S}\vlcb_{i, \ell}}\mathbbm{1}_{i \in S} \geq \frac{v_i}{1 + \sum_{j \in S}v_j}\mathbbm{1}_{i \in S} = \pi(i, S),
\end{equation*}
where we simultaneously use $\vucb_{i, \ell} - v_i \geq 0$ and $v_i - \vlcb_{i, \ell} \geq 0$. We introduce the \textit{pointwise} estimation error for each product $i \in [N]$ and $\ell \geq 0$:
\begin{equation}\label{eq:epsilon}
    \epsilon_{i, \ell} := \left(|v_{i, \ell}^{\sf UCB} - v_i| + |v_i - \vlcb_{i, \ell}|\right)\mathbbm{1}_{i \in S_\ell}.
\end{equation}
$\boldsymbol{\epsilon}$ is the width of the confidence interval on item $i$ if offered in epoch $\ell$, and represents how well utilities are estimated. Because of the intricate nature of the MNL choice function $\pi$, we will see that the \textit{effective} estimation error
\begin{equation}\label{eq:w}
    w_{i, \ell} := \pi(i, S_\ell)\sum_{j \in S_\ell}\epsilon_{j, \ell} + \epsilon_{i, \ell}
\end{equation}
is just as relevant. In particular, we will show that with high probability, the noise terms are essentially a function of the four cumulative errors
\begin{equation*}
    \|\boldsymbol{\epsilon}_i\|_1 = \sum_{\ell \geq 1}\epsilon_{i, \ell}, \quad \|\boldsymbol{\epsilon}_{i}\|_2 := \sqrt{\sum_{\ell \geq 1} \epsilon_{i, \ell}^2}, \quad \|\boldsymbol{w}_{i}\|_1 := \sum_{\ell \geq 1} w_{i, \ell}, \quad \|\boldsymbol{w}_{i}\|_2 := \sqrt{\sum_{\ell \geq 1} w_{i, \ell}^2},
\end{equation*}
where $\|\cdot\|_1, \|\cdot\|_2$ are respectively the $1$-norm and the $2$-norm.

\subsection{Bounding the inventory consumption}

Let $\tau$ be the (random) stopping time of Algorithm \ref{main_algo}. Because the algorithm stops as soon as a product is depleted, we must have $\tau \leq T$ (and as a consequence, MNLwK-UCB is feasible). In this section, we analyse the inventory consumption of a fictive algorithm that does not follow this stopping rule. Up until $\tau$, the two algorithms behave exactly the same. However, nothing prohibits the fictive algorithm from exceeding the inventory constraints. The main goal of this section is to prove that this does not happen with high probability. To be precise, we fix a product $i \in [N]$ and we focus on the study of $\mathcal{I}_{i, T}$, the exact consumption of product $i$ at the end of the time horizon $T$ for this fictive algorithm. The main result of this section is that $\mathcal{I}_{i, T}$ never exceeds the available inventory:
\begin{proposition}\label{proposition:consumption}
    The inequalities $\mathcal{I}_{i, T} \leq q_i T$ hold simultaneously for all products $i \in [N]$ with probability at least $1 - \frac{15}{T}$. Or equivalently, Algorithm \ref{main_algo} stops exactly at $T$ with probability at least $1 - \frac{15}{T}$, i.e., $\tau = T$.
\end{proposition}

The proof of Proposition \ref{proposition:consumption} will be done through successive decompositions that translate the different aspects of the interaction between the seller and the consumer. First, notice that $\mathcal{I}_{i,T}$ can be decomposed by looking into each (random) epoch $\mathcal{E}_\ell$ separately:
\begin{equation*}
    \mathcal{I}_{i, T} = \sum_{t \geq 1} \mathbbm{1}_{c_t = i} = \sum_{\ell \geq 1} \sum_{t \in \mathcal{E}_\ell}\mathbbm{1}_{c_t = i} = \sum_{\ell \geq 1} n_i(\ell),
\end{equation*}
where $n_i(\ell) = \sum_{t \in (\ell)}\mathbbm{1}_{c_t = i}$ is the total consumption of product $i$ in epoch $\mathcal{E}_\ell$.
For each epoch $\ell \geq 1$, and conditionally on the played assortment $S_\ell$ and the size of the epoch $|\mathcal{E}_\ell|$, each $\mathbbm{1}_{c_t = i}$ is an independent Bernoulli random variable of parameter $\pi(i, S_\ell)$, so that
\begin{equation*}
    \forall \ell \geq 1, \quad n_i(\ell)|S_\ell, |\mathcal{E}_\ell| \sim \text{Bin}(|\mathcal{E}_\ell|, \pi(i, S_\ell|\boldsymbol{v})).
\end{equation*}
This is due to the inherently random choice function $\pi$, and can be seen as an effect of the customer's random behavior.

The seller acts as if $(\pi^{\sf UCB}_{\ell})_{\ell \geq 1}$ are the true choice functions $\pi$, and consequently suffers a mis-specification error $\pi(i, \cdot) - \pi^{\sf UCB}_{\ell}(i, \cdot)$:
\begin{equation*}
    \mathcal{I}_{i, T} = \sum_{\ell \geq 1} |\mathcal{E}_\ell| \pi^{\sf UCB}_{\ell-1}(i, S_\ell) + \sum_{\ell \geq 1}\left( n_i(\ell) - |\mathcal{E}_\ell| \pi(i, S_\ell)\right) + \sum_{\ell \geq 1} |\mathcal{E}_\ell|\left( \pi(i, S_\ell) - \pi^{\sf UCB}_{\ell-1}(i, S_\ell)\right).
\end{equation*}
Next, we integrate how the offered assortment $S_{\ell}$ is chosen. In our case, the seller randomizes over $\mathbb{P}(S_\ell = S)=y_{\ell}(S)$, so that
\begin{equation*}
    \sum_{\ell \geq 1}|\mathcal{E}_\ell|\pi^{\sf UCB}_{\ell-1}(i, S_\ell) = \sum_{\ell \geq 1}|\mathcal{E}_\ell| \sum_{S \in \mathcal{S}}y_\ell(S)\pi^{\sf UCB}_{\ell-1}(i, S) + \sum_{\ell \geq 1}|\mathcal{E}_{\ell}|\left(\pi^{\sf UCB}_{\ell-1}(i, S_\ell)- \sum_{S \in \mathcal{S}}y_\ell(S)\pi^{\sf UCB}_{\ell-1}(i, S)\right).
\end{equation*}
This gives us the following inventory consumption decomposition:
\begin{equation}\label{eq:inventorydecomposition}
    \mathcal{I}_{i, T} = \sum_{\ell \geq 1}|\mathcal{E}_\ell| \sum_{S \in \mathcal{S}}y_\ell(S)\pi^{\sf UCB}_{\ell-1}(i, S) + \delta_{i, T}^{\sf random} + \delta_{i, T}^{\sf mnl} + \delta_{i, T}^{\sf shift}.
\end{equation}
where $\sum_{S \in \mathcal{S}}y_\ell(S)\pi^{\sf UCB}_{\ell-1}(i, S)$ is the consumption the seller was hoping to achieve (and is bounded by $(1 - \omega_i)q_i$), and where
\begin{align*}
    \delta_{i, T}^{\sf random} &:= \sum_{\ell \geq 1}|\mathcal{E}_{\ell}|\left(\pi^{\sf UCB}_{\ell-1}(i, S_\ell)- \sum_{S \in \mathcal{S}}y_\ell(S)\pi^{\sf UCB}_{\ell-1}(i, S)\right),\\
    \delta_{i, T}^{\sf mnl} &:= \sum_{\ell \geq 1}\left( n_i(\ell) - |\mathcal{E}_\ell| \pi(i, S_\ell)\right),\\
    \delta_{i, T}^{\sf shift}&:=\sum_{\ell \geq 1} |\mathcal{E}_\ell|\left( \pi(i, S_\ell) - \pi^{\sf UCB}_{\ell-1}(i, S_\ell)\right).
\end{align*}
\noindent Equation \eqref{eq:inventorydecomposition} states that up to three noise terms $\delta_{i, T}^{\sf random}, \delta_{i, T}^{\sf mnl}$, and $\delta_{i, T}^{\sf shift}$, the inventory consumption of product $i$ at the end of time $T$ is the same as the first term $ \sum_{\ell \geq 1}|\mathcal{E}_\ell| \sum_{S \in \mathcal{S}}y_\ell(S)\pi^{\sf UCB}_{\ell-1}(i, S)$, a quantity over which the seller has full control through \text{LP$(\pi^{\sf UCB}_\ell)$}. The first noise term $\delta_{i, T}^{\sf random}$ measures the cost of randomizing over the set of assortments, or in other words, the cost of viewing the consumption in the fluid model. The second noise term $\delta_{i, T}^{\sf mnl}$ measures the cost of randomization  incurred by the customer. This depends uniquely on the inherently random customer behavior, which is in our particular case, a MNL-choice behavior. Both of these terms can be seen as a consequence of randomizing strategies between the two players (the seller and the customer) in a repeated game. It is easy to see that over time, the effect of this randomizing becomes negligible, since on the one hand the seller will have many opportunities to randomize (intuitively, there is a $\sim T$ number of epochs), and on the other hand, the customer's behavior remains the same through the selling horizon $T$. This is true for any sequence of decisions $\{y_\ell\}_{\ell \geq 1}$, and any choice model $\pi$, as we can derive a $\tilde{O}(\sqrt{T})$ bound without using the optimality of $y_\ell$ or the MNL structure of $\pi$.

The third noise term $\delta_{i, T}^{\sf shift}$ measures the cumulative cost of mis-specifying the true model $\pi(i, \cdot)$ for $\pi^{\sf UCB}_{\ell}(i, \cdot)$. This is where the quality of the seller's optimism $\{\pi^{\sf UCB}_{\ell}\}_{\ell \geq 1}$ and the stopping rule of each epoch $\ell$ simultaneously intervene. 

The next step of the proof will consist of bounding each noise term separately. The first two noise terms $\delta_{i, T}^{\sf random}$ and $\delta_{i, T}^{\sf mnl}$ can be viewed as centered bounded increments, which yields a high probability $\tilde{O}(\sqrt{T})$ bound by Azuma-Hoeffding. The implication of this bound is that, regardless of the seller's decisions and the customer's behavior, one can always view this problem in the fluid regime without incurring a high regret in $T$. However, this bounding is loose in the other parameters. In particular, it does not use the scalability of $\boldsymbol{v}$ in $\pi$, or that the seller makes increasingly accurate estimates of the true distribution $\pi$. 

The major shortcoming of the previous concentration bounds is that they use a global bounding of $\boldsymbol{\epsilon}_i$. One way to overcome this shortcoming is to account for each local error, and instead of using a global bounding, uses an epoch-dependent bounding. This technique is common and can be found in \cite{Agrawal_global_bandits}, \cite{Kleinberg2008} and \cite{Babaioff}. We state the result formally in the following Lemma:
\begin{lemma}[Bounding $\delta_{i, T}^{\sf random}$]\label{lem:deltarandommnl}
    There exists a universal constant $C_0$ such that the following inequality
    \begin{equation*}
        \left|\delta_{i, T}^{\sf random}\right| \leq 3\sqrt{C_0 \log T}\sqrt{\sum_{\ell \geq 1} |\mathcal{E}_\ell| \sum_{S \in \mathcal{S}}y_\ell(S)\pi^{\sf UCB}_{\ell-1}(i, S)}+ 3C_0 \log T .
    \end{equation*}
    holds with probability at least $1 - \frac{1}{T}$. Such an event is denoted $\mathcal{A}^{\sf random}$.
\end{lemma}
The same analysis can be applied to $\left|\delta_{i, T}^{\sf mnl}\right|$, where we obtain with probability at least $1 - \frac{1}{T}$,
\begin{equation}\label{eq:concentrationmnl}
    \left|\delta_{i, T}^{\sf mnl}\right| \leq 3\sqrt{C_0 \log T}\sqrt{\sum_{\ell \geq 1} |\mathcal{E}_\ell| \pi(i, S_\ell)}+ 3C_0 \log T.
\end{equation}
Inequality \eqref{eq:concentrationmnl} can be seen as a quadratic inequality in $\sqrt{\sum_{\ell \geq 1} |\mathcal{E}_\ell| \pi(i, S_\ell)}$, over which the seller has less control. A better more suitable bounding, derived from Inequality \eqref{eq:concentrationmnl}, is formally stated in the following Lemma:
\begin{lemma}[Bounding $\delta_{i, T}^{\sf mnl}$]\label{lem:deltarandommnl2} The inequality $\delta_{i, T}^{\sf mnl} \leq 3\sqrt{C_0 \log T}\sqrt{\mathcal{I}_{i, T}} + 3 C_0 \log T$ holds with probability at least $1 - \frac{1}{T}$. Such an event is denoted $\mathcal{A}^{\sf mnl}$.
\end{lemma}
We now deal with $\delta_{i, T}^{\sf shift}$. By choice of MNLwK-UCB, we have conditionally on $\mathcal{A}^{\sf CB1}$, $\pi \leq \pi^{\sf UCB}$, so that $\delta_{i, T}^{\sf shift} \leq 0$. Intuitively, choosing with respect to an optimistic choice function $\pi^{\sf UCB}$ makes the consumption safer, regardless of the choice of the confidence bound. However, the price the seller pays for this optimism is deferred to the accumulated revenue, which study is conducted in the next section.

We introduce the following high probability event:
\small
\begin{equation*}
    \mathcal{A} := \mathcal{A}^{\sf random} \cap \mathcal{A}^{\sf mnl} \cap \mathcal{A}^{\sf CB1}\cap \mathcal{A}^{\sf CB2}. \qquad \text{By union bound:} \qquad \mathbb{P}\left(\mathcal{A}\right) \geq 1 - \frac{1 + 1 + 13}{T} = 1 - \frac{15}{T}.
\end{equation*}
\normalsize
Applying in the following order Lemmas \ref{lem:deltarandommnl}, \ref{lem:deltarandommnl2}, and $\delta_{i, T}^{\sf shift} \leq 0$ gives conditionally on $\mathcal{A}$:
\begin{equation}\label{eq:shift}
    \delta_{i, T}^{\sf random} + \delta_{i, T}^{\sf mnl} + \delta_{i, T}^{\sf shift} \leq A_{q, 0} + A_{q,2} \sqrt{\sum_{\ell \geq 1} |\mathcal{E}_\ell| \sum_{S \in \mathcal{S}}y_\ell(S)\pi^{\sf UCB}_{\ell-1}(i, S)} + A_{q, 1}\sqrt{\mathcal{I}_{i, T}},
\end{equation}
where $A_{q, 0} = 6 C_0 \log  T$, $A_{q, 1} = A_{q, 2} = 3 \sqrt{C_0 \log T}$ are numerical constants times polylog factors in $T$. From Equation \ref{eq:inventorydecomposition} and \eqref{eq:shift}, we have:
\begin{equation}\label{eq:systeminventory}
    \mathcal{I}_{i, T} - A_{q, 1}\sqrt{\mathcal{I}_{i, T}} - \sum_{\ell \geq 1}|\mathcal{E}_\ell| \sum_{S \in \mathcal{S}}y_\ell(S)\pi^{\sf UCB}_{\ell-1}(i, S) -  A_{q, 2} \sqrt{\sum_{\ell \geq 1} |\mathcal{E}_\ell| \sum_{S \in \mathcal{S}}y_\ell(S)\pi^{\sf UCB}_{\ell-1}(i, S)} - A_{q, 0} \leq 0.
\end{equation}
We show that Inequality \eqref{eq:systeminventory} is sufficient to conclude that the inventory is never depleted before $T$ with high probability:
\begin{lemma}\label{lemma:algebraic}
    Inequality \eqref{eq:systeminventory} implies $\mathcal{I}_{i, T} \leq q_i T$.
\end{lemma}

The proof of Lemma \ref{lemma:algebraic} is technical and is deferred to Appendix \ref{app:lemmas}. It requires solving successively two quadratic systems. A first one to derive an upper bound on $\sqrt{\mathcal{I}_{i, T}}$, and a second one to derive an sufficient condition on $\sum_{\ell \geq 1}|\mathcal{E}_\ell| \sum_{S \in \mathcal{S}}y_\ell(S)\pi^{\sf UCB}_{\ell-1}(i, S)$ for which we have $\mathcal{I}_{i, T}$. We then show that this sufficient condition is satisfied under the inequality $\sum_{\ell \geq 1}|\mathcal{E}_\ell| \sum_{S \in \mathcal{S}}y_\ell(S)\pi^{\sf UCB}_{\ell-1}(i, S) \leq (1 - \omega_i)q_i$, which stems from the constraints of the program LP$(\pi^{\sf UCB}_{\ell - 1})$, provided the expression of $\omega_i$.

Lemma \ref{lemma:algebraic} proves that with high probability, no products are depleted before the end of the time horizon, or equivalently, the stopping time of Algorithm \ref{main_algo} is $T$. This completes the proof of Proposition \ref{proposition:consumption}.

\noindent \textbf{Bounding the accumulated mis-specification:}

For completeness of the study, we bound the magnitude of $\delta_{i, T}^{\sf shift}$. Bounding $\delta_{i, T}^{\sf shift}$ from below requires using the locally Lipschitz property satisfied by MNL choice models, which we formally state in the following lemma:
\begin{lemma}\label{lem:lipshitz}
    Conditionally on $\mathcal{A}^{\sf CB1}$, we have for each $i \in S \subset [N]$, and $\ell \geq 0$,
    \begin{equation*}
        \left|\pi(i, S) - \pi^{\sf UCB}_{\ell}(i, S)\right| \leq \frac{(\vucb_{i, \ell} - v_i) + \pi(i, S)\sum_{j \in S} v_j - \vlcb_{j, \ell}}{1 + V(S)}.
    \end{equation*}
\end{lemma}
Lemma \ref{lem:lipshitz} implies the following bounding:

\begin{corollary}\label{eq:shifterror}
Conditionally on $\mathcal{A}^{\sf CB1}$,
	\begin{equation*}
	   \left|\delta_{i, T}^{\sf shift}\right|
     \leq \left|\sum_{\ell \geq 1}|\mathcal{E}_\ell|(\pi(i, S_\ell) - \pi^{\sf UCB}_{\ell-1}(i, S_\ell))\right| \leq\sum_{\ell \geq 1}\frac{|\mathcal{E}_\ell|}{1 + V(S_\ell)}w_{i, \ell-1}.
	\end{equation*}
	where $w$ is defined in \eqref{eq:w}.
\end{corollary}

\noindent Following Corollary \ref{eq:shifterror}, it remains to bound the \textit{weighted} error
\begin{equation*}
	\sum_{\ell \geq 1} \frac{|\mathcal{E}_\ell|}{1 + V(S_\ell)}w_{i, \ell-1}.
\end{equation*}

The previous sums can be seen a summation over $\{w_{i, \ell-1}\}_{\ell \geq 1}$, re-weighted with the (random) weights $\left\{\frac{|\mathcal{E}_\ell|}{1 + V(S_\ell)}\right\}_{\ell \geq 1}$. Lemma \ref{lemma:essential_properties} states that each weight has mean $1$, therefore one can hope that for a large number of epochs, the effect of this re-weighting is neutral, i.e., one can hope for an approximation of the type:
\begin{equation*}
\sum_{\ell \geq 1} \frac{|\mathcal{E}_\ell|}{1 + V(S_\ell)}w_{i, \ell-1} \approx \sum_{\ell \geq 1} w_{i, \ell-1},
\end{equation*}
where the approximation loss of each term is small, and where $T_i$ is the number of times product $i$ has been offered at the end of time $T$. Classic concentration inequalities do not apply here, mainly because the weights $\left\{\frac{|\mathcal{E}_\ell|}{1 + V(S_\ell)}\right\}_{\ell \geq 1}$ are unbounded. Most concentration inequalities where the variables are unbounded require a uniform upper bound on the first moments. In our case, these inequalities would still not be enough for a satisfying regret bound, as the means of $|\mathcal{E}_\ell|$ are bounded by $V(S_\ell) + 1$, which can be as high as $K = \max_{S \in \mathcal{S}} |S| \leq N$, and this would yield a regret bound with sub-optimal dependency in $ N$. Without any additional use of the algorithm's structure, this issue cannot be overcome since even in the simplest case where $\{\mathcal{E}_\ell\}_{\ell \geq 1}$ are assumed to be independent, the sharpest known bound on the sum yields a bound that scales in $K^2$. If one wants to obtain a sharper bound, it is necessary to factor in the geometric structure of $|\mathcal{E}_\ell|$, as well as the variations within the sum. This is expressed in the following high probability (intermediate) bounding:

\begin{lemma}\label{lemma:geometricconcentration}
The inequality $\sum_{\ell \geq 1}\frac{|\mathcal{E}_\ell|}{1 + V(S_\ell)}w_{i, \ell-1} \leq 2\log T\|\boldsymbol{w}_i\|_2 + \|\boldsymbol{w}_i\|_1$ holds with probability at least $1 - \frac{8}{T}$. We denote such an event $\mathcal{A}^{\sf geom}$.
\end{lemma}

We use arguments from the proof of Freedman's inequality (See Theorem 2.1 in \cite{fan2015exponential}) combined with optimal sampling. The complete proof is stated in Appendix \ref{app:lemmas}.
\noindent It remains to bound the errors $\|\boldsymbol{w}_i\|_1$ and $\|\boldsymbol{w}_i\|_2$. This is where the accuracy of our confidence bounds intervenes, which is captured in the event $\mathcal{A}^{\sf CB2}$ introduced in Lemma \ref{lemma:vcb}:
\begin{lemma}\label{lemma:errors}
    For $\ell \geq 0$ and $i \in [N]$, and let $N_i$ be the total number of times product $i$ is offered up to time $T$. We have conditionally on $\mathcal{A}^{\sf CB2}$,
    \begin{align*}
        \|\boldsymbol{\epsilon}_i\|_1 &\leq 2\Tilde{C}_1 v_i\sqrt{\log\left(\sqrt{N}T^2 + 1\right)N_i} + (\Tilde{C}_2v_i + \Tilde{C}_3)\log^2\left(\sqrt{N}T^2 + 1\right),\\
        \|\boldsymbol{\epsilon}_i\|_2 &\leq \sqrt{2C_1^2 + 4C_2^2}\log\left(\sqrt{N}T^2 + 1\right),\\
        \|\boldsymbol{w}_i\|_1 &\leq 2\Tilde{C}_1 v_i \|\boldsymbol{v}\|_2\sqrt{\log\left(\sqrt{N}T^2 + 1\right)T} + \log^2\left(\sqrt{N}T^2 + 1\right)\left(2\Tilde{C}_2 v_i\|\boldsymbol{v}\|_1 + 2\Tilde{C}_3 N\right),\\
        \|\boldsymbol{w}_i\|_2 &\leq 2\sqrt{2}(C_1+ C_2\sqrt{2})\log\left(N T^2 + 1\right)\sqrt{KN }.
    \end{align*}
\end{lemma}

Combining Corollary \ref{eq:shifterror}, Lemma \ref{lemma:geometricconcentration}, and Lemma \ref{lemma:errors} yields the following high probability bounding on the shift error:
\begin{equation*}
    \left|\delta_{i, T}^{\sf shift}\right| \lesssim v_i \|\boldsymbol{v}\|_2 \sqrt{T} + N.
\end{equation*}

\subsection{Decomposing the expected reward}
The third and last part of the regret analysis is to prove that conditionally on the non-depletion of the inventory (which we proved happens with high probability in the previous section), the generated reward is sufficiently high. We provide a sketch of how this is conducted, and defer the details to Appendix \ref{app:lemmas}. By using the same decomposition stated in Equation $\eqref{eq:inventorydecomposition}$, we can view the previous generated reward as being exactly:
\begin{equation*}
    \sum_{t \geq 1} r_{c_t} = \sum_{i \in [N]} r_i \sum_{\ell \geq 1}|\mathcal{E}_\ell|\sum_{S \in \mathcal{S}} y_\ell(S)|\pi^{\sf UCB}_{\ell-1}(i, S) + \sum_{i \in [N]}r_i(\delta_{i,  T}^{\sf random} + \delta_{i, T}^{\sf mnl} + \delta_{i,  T}^{\sf shift}).
\end{equation*}
The two noise terms $\delta_{i, T}^{\sf random}$ and $\delta_{i, T}^{\sf mnl}$ are centered. Therefore by taking the law of total expectation into the previous equation, we obtain
\begin{align*}
    \mathbb{E}\left[\sum_{t \geq 1} r_{c_t} \right] &= \mathbb{E}\left[\sum_{\ell \geq 1} |\mathcal{E}_\ell| \sum_{i \in [N]}r_i \sum_{S \in \mathcal{S}} y_\ell(S) \pi^{\sf UCB}_{\ell-1}(i,S)\bigg|\mathcal{A}^{\sf c}\right]\mathbb{P}(\mathcal{A}^{\sf c}) + \mathbb{E}\left[\sum_{i \in [N]} r_i \delta_{i, T}^{\sf shift}\bigg|\mathcal{A}^c\right]\mathbb{P}(\mathcal{A}^c) &(a)\\
    &+ \mathbb{E}\left[\sum_{\ell \geq 1} |\mathcal{E}_\ell| \sum_{i \in [N]}r_i \sum_{S \in \mathcal{S}} y_\ell(S) \pi^{\sf UCB}_{\ell-1}(i,S)\bigg|\mathcal{A}\right]\mathbb{P}(\mathcal{A}) + \mathbb{E}\left[\sum_{i \in [N]} r_i \delta_{i, T}^{\sf shift}\bigg|\mathcal{A}\right]\mathbb{P}(\mathcal{A}). &(b)
\end{align*}
$(a)$ is negligible as $\mathcal{A}^{\sf c}$ is a rare event $\left(\mathbb{P}(\mathcal{A}^c) \leq \frac{15}{T}\right)$, and
\begin{equation*}
    r_i \delta_{i, T}^{\sf shift} \geq - N r_{\max} T, \qquad
    \sum_{\ell \geq 1} |\mathcal{E}_\ell| \sum_{i \in [N]}r_i \sum_{S \in \mathcal{S}} y_\ell(S) \pi^{\sf UCB}_{\ell-1}(i,S) \geq 0,
\end{equation*}
so that
\begin{equation}\label{equation:terma}
    (a) \geq -\frac{15}{T}N r_{\max} T = -15 N r_{\max} \gtrsim -N r_{\max}.
\end{equation}
The first term in $(b)$ is the main contributor to the generated revenue is close to $T r_{\sf opt}$. Consequently, it is the main contributor to the general revenue: 
\begin{lemma}\label{lemma:firstterm}
    Conditionally on $\mathcal{A}$, we have for each epoch $\ell \geq 1$:
    \begin{equation*}
        \sum_{i \in [N]}r_i \sum_{S \in \mathcal{S}} y_\ell(S) \pi^{\sf UCB}_{\ell-1}(i,S) \geq r_{\sf opt} - r_{\sf disc}.
    \end{equation*} 
    where $r_{\sf disc} = \sum_{i \in [N]}r_i \omega_{i} \sum_{S \in \mathcal{S}} y^{\sf OPT}(S)\pi(i, S) = \sum_{i \in [N]}\omega_{i} r_i q^{\sf OPT}_i$.
\end{lemma}

Lemma \ref{lemma:firstterm} implies that conditionally on $\mathcal{A}$, each time-step generates the optimal revenue of the fluid relaxation $r_{\sf opt}$, up to a loss not exceeding $r_{\sf disc}$. The intuition behind $r_{\sf disc}$ is as follows. For each product $i$, $r_i q_i^{\sf OPT}$ represents the expected revenue from the sales of product $i$ in the optimal fluid LP regime. Since LP$(\pi^{\sf UCB}_{\ell -1}$) sacrifices $\omega_{i}$ of the inventory of product $i$, $\omega_{i} r_i q_i^{\sf OPT}$ represents the sacrificed revenue of product $i$, and $r_{\sf disc}$ represents the total sacrificed revenue over all products. This sacrificed revenue is accumulated over time, and we prove that conditionally on $\mathcal{A}$, the total sacrificed revenue satisfies
\normalsize
\begin{equation*}
    \sum_{\ell \geq 1}|\mathcal{E}_\ell| r_{\sf disc} = T \sum_{i \in [N]} \omega_i r_i q_i^{\sf OPT} \leq A_q \left(\sum_{i \in [N]} r_i \frac{q_i^{\sf OPT}}{\sqrt{q_i}}\right)\sqrt{T}  = A_q r_{\sf inv} \sqrt{T},
\end{equation*}
 \noindent where we choose $A_q = A_0 + A_1$. As a consequence of the equality above, $r_{\sf inv}$ will represent the first main driver of the $\sqrt{T}$ term in the regret. To deal with the second term of $(b)$, we once again apply the boundings derived in Lemma \ref{lemma:errors}, and we prove that conditionally on $\mathcal{A}$,
\begin{equation*}
   \sum_{i \in [N]} r_i \delta_{i, T}^{\sf shift} \gtrsim -\left(\|\boldsymbol{v}\|_2 r_{\sf opt} + \|\boldsymbol{v}\cdot\boldsymbol{r}\|_2\right) \sqrt{T} - r_{\max} N.
\end{equation*}
The previous term represents the loss in revenue due to the seller mis-specifying the model, and is expected to grow in $\sqrt{NT} + N$, in a similar fashion to its analog in the infinite inventory setting. In particular, this term will also be a main driver of the regret bound. Therefore
\small
\begin{equation}\label{eq:termb}
    (b) \gtrsim T r_{\sf opt} - \left(r_{\sf inv} + \|\boldsymbol{v}\|_2 r_{\sf opt} + \|\boldsymbol{v}\cdot\boldsymbol{r}\|_2\right)\sqrt{T} - r_{\max} N.
\end{equation}
\normalsize
\noindent By recalling that $\text{Regret}_T = T r_{\sf opt} -\mathbb{E}\left[\sum_{t \geq 1} r_{c_t} \right]$, combining inequalities \eqref{equation:terma} and \eqref{eq:termb} yields:
\small
\begin{equation*}
    \text{Regret}_T \lesssim N r_{max}+ \left(r_{\sf inv} + \|\boldsymbol{v}\|_2 r_{\sf opt} + \|\boldsymbol{v}\cdot\boldsymbol{r}\|_2\right)\sqrt{T}.
\end{equation*}
\normalsize
This provides the proof of of Theorem \ref{main_theorem}. 
\ACKNOWLEDGMENT{
This material is based upon work partially supported by: the National Science Foundation (NSF) grants CMMI 1636046, and the Amazon and Columbia Center of Artificial Intelligence (CAIT) PhD Fellowship.
}


\bibliographystyle{ACM-Reference-Format}

\bibliography{sample-bibliography}
\newpage

%

\newpage
\begin{APPENDIX}{}
\section{Concentration inequalities}
\subsection{Bernoulli Chernoff bounds}
We introduce the following well-known concentration theorems:

\begin{theorem}[Azuma-Hoeffding] \label{theorem:azuma_h}
Consider random variables $X_1, \ldots, X_n$, defined with respect to a filtration $\mathcal{F}_n$ and a stopping time $\tau \leq n$ a.s, and that $(X_k)$ is uniformly bounded by $X^*$. Then following inequality holds:

\begin{equation*}
    \prob\left(\sum_{k = \tau}^n X_k - \mathbb{E}(X_k|X_{k-1}, \ldots, X_1)> X^*\sqrt{2n \log T}\right) \leq \frac{1}{T}.
\end{equation*}
\end{theorem}
\noindent Note that Theorem \ref{theorem:azuma_h} is usually mentioned with a deterministic $\tau$. However, notice that for each realization of $\tau$, one can derive a sharper bound and relax it using $0 \leq \tau \leq n$. Since the probability does not depend on such a realization, a law of total probability gives the result.

\begin{theorem}\label{theorem:gen_chernoff}
Consider a sequence of random variables $X_1, \ldots, X_n \in [0, 1]$.

Set $\mu = \sum_{k = 1}^n \mathbb{E}(X_k|X_1, \ldots, X_{k-1})$ and $\tilde X = \sum_{k = 1}^n X_i$. There exists a universal constant $\lambda_0 > 0$ for which for all $\lambda > 0$, the following inequality holds:
\begin{equation*}
    \prob\Big(|\tilde X -  \mu| > 3 \sqrt{\lambda \mu n} + 3 \lambda \Big) \leq \exp(-\lambda/\lambda_0).
\end{equation*}
\end{theorem}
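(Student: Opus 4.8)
The plan is to recognize this as a Bernstein-type (Freedman) concentration inequality for a martingale whose increments are bounded and whose predictable quadratic variation is controlled by the random mean $\mu$, and then to upgrade the usual deterministic-variance statement into one phrased directly in terms of $\mu$. Write $\mathcal{F}_k = \sigma(X_1,\ldots,X_k)$, $p_k = \E[X_k \mid \mathcal{F}_{k-1}]$, and $Y_k = X_k - p_k$. Then $(Y_k)$ is a martingale difference sequence with $|Y_k| \le 1$ and $\sum_{k=1}^n Y_k = \bar X - \mu$, so it suffices to control the two tails of $\sum_k Y_k$.

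The first key step is the self-bounding variance estimate. Since $X_k \in [0,1]$ we have $X_k^2 \le X_k$, hence $\E[Y_k^2 \mid \mathcal{F}_{k-1}] \le \E[X_k^2\mid\mathcal{F}_{k-1}] \le p_k$, and the predictable quadratic variation satisfies $V_n := \sum_{k=1}^n \E[Y_k^2\mid\mathcal{F}_{k-1}] \le \mu$. This is exactly what sharpens the crude $\sqrt{n}$ deviation of plain Azuma--Hoeffding into the $\sqrt{\mu}$ scaling we want. I would make this quantitative through the exponential supermartingale $W_k = \exp(s\sum_{j\le k}X_j - (e^s-1)\sum_{j\le k}p_j)$: the elementary bound $e^{sx}\le 1 + (e^s-1)x$ on $[0,1]$ gives $\E[e^{sX_k}\mid\mathcal{F}_{k-1}]\le \exp((e^s-1)p_k)$, so $\E[W_k\mid\mathcal{F}_{k-1}]\le W_{k-1}$ and hence $\E[\exp(s\bar X - (e^s-1)\mu)]\le 1$ for every $s$. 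Optimizing over $s$ yields the one-sided Bernstein estimate $\prob(\sum_k Y_k \ge t,\ V_n \le v)\le \exp(-t^2/(2v+2t/3))$.

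The main obstacle, and the reason this is not a verbatim application of Azuma--Hoeffding, is that the threshold $3\sqrt{\lambda\mu}+3\lambda$ is measured against the random quantity $\mu$ rather than a fixed constant, so one cannot simply substitute $v=\mu$ into the Bernstein estimate. I would resolve this by a dyadic peeling argument over the scale of $\mu$: partition $[0,n]$ into $O(\log n)$ ranges $[2^{j-1},2^j)$, and on $\{\mu\in[2^{j-1},2^j)\}$ apply the estimate with $v=2^j$ (legitimate since $V_n\le\mu<2^j$ there) and threshold $t = 3\sqrt{\lambda 2^{j-1}}+3\lambda$ (legitimate since $\mu\ge 2^{j-1}$). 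On each range the exponent $t^2/(2v+2t/3)$ is bounded below by $\lambda/\lambda_1$ for a universal $\lambda_1$ — the two summands $3\sqrt{\lambda\mu}$ and $3\lambda$ are tuned to dominate respectively the $\sqrt{v}$ and the linear-$t$ parts of the denominator in the two regimes $\lambda\le 2^j$ and $\lambda>2^j$ — and a union bound over the $O(\log n)$ ranges absorbs the logarithmic factor into the universal constant $\lambda_0$ whenever the stated bound is nontrivial.

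Finally, the lower tail $\prob(\sum_k Y_k \le -(3\sqrt{\lambda\mu}+3\lambda))$ is handled identically by running the same argument for $-Y_k$ (using the supermartingale with $s<0$), and a last union bound over the two tails gives the two-sided inequality $\prob(|\bar X - \mu| > 3\sqrt{\lambda\mu}+3\lambda)\le \exp(-\lambda/\lambda_0)$ after adjusting $\lambda_0$ by a factor of two. As an alternative to peeling, one could instead stop the martingale the first time the cumulative predictable mean $\sum_{j\le k}p_j$ crosses each dyadic level and invoke the stopping-time form of Azuma--Hoeffding (Theorem~\ref{theorem:azuma_h}); this is the optional-sampling route indicated by \cite{Babaioff}, trading the peeling union bound for a stopping-time union bound of the same size.
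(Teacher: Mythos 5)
The paper itself contains no proof of this statement: it is imported verbatim from \cite{Babaioff} (Theorem 4.10), with only the remark that the proof ``combines optional sampling arguments with sharp Azuma--Hoeffding bounds.'' So you are attempting more than the paper does, and most of your machinery is sound: the self-bounding estimate $\E[X_k^2\mid\mathcal{F}_{k-1}]\le p_k$, the chord bound $\E[e^{sX_k}\mid\mathcal{F}_{k-1}]\le\exp((e^s-1)p_k)$ and the resulting supermartingale, the Freedman-type inequality $\prob\big(\bar X-\mu\ge t,\ \mu\le v\big)\le\exp\big(-t^2/(2v+2t/3)\big)$, and the per-scale computation showing that with $v=2^j$, $t=3\sqrt{\lambda 2^{j-1}}+3\lambda$ the exponent is at least $\tfrac94\lambda$, are all correct.

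The genuine gap is your last step. The union bound over the dyadic scales of $\mu$ yields $\prob\big(|\bar X-\mu|>3\sqrt{\lambda\mu}+3\lambda\big)\le C\log_2(n)\,e^{-c\lambda}$, and the prefactor $\log_2 (n)$ depends on $n$, whereas $\lambda_0$ must be universal. Absorbing it requires $\lambda\ge c'\log\log n$; below that threshold your bound is vacuous (it exceeds $1$) while the claimed bound $e^{-\lambda/\lambda_0}<1$ is not, and the hedge ``whenever the stated bound is nontrivial'' does not delimit this regime, since the stated bound is nontrivial for \emph{every} $\lambda>0$. Moreover, this gap cannot be repaired, because the statement with an $n$-free constant is false for general adapted sequences: let $X_k$ be i.i.d.\ Bernoulli$(1/2)$ until the first time $k^*$ at which $\sum_{j\le k}(X_j-\tfrac12)>3\sqrt{\lambda k/2}+3\lambda$, and set $p_k=0$, $X_k=0$ for $k>k^*$. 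On $\{k^*\le n\}$ one has $\mu=k^*/2$ and $\bar X-\mu=\sum_{j\le k^*}(X_j-\tfrac12)>3\sqrt{\lambda\mu}+3\lambda$, and by the law of the iterated logarithm the square-root boundary is crossed almost surely, so $\prob(k^*\le n)\to 1$ as $n\to\infty$ for every fixed $\lambda$. Hence any correct proof must keep an $n$-dependent factor (as your peeling naturally does) or restrict to $\lambda=\Omega(\log\log n)$; in that restricted regime your argument is complete, and this is also the only regime the paper ever uses (it invokes the result with $\lambda=\Theta(\log T)$ and $n\le T$). Your optional-stopping variant, which you correctly note produces ``a stopping-time union bound of the same size,'' is subject to exactly the same objection.
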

\noindent The proof of Theorem \ref{theorem:gen_chernoff} lies for instance in \cite{Babaioff}, and essentially combines optional sampling arguments with sharp Azuma-Hoeffding bounds. 

\medskip
\noindent \textbf{Proof of Lemma \ref{lem:deltarandommnl}}
\begin{proof}{Proof.}
    For each $t \geq 1$, let $\ell_t$ denote the unique (random) epoch index with $t \in \ell_t$. We have:
    \begin{equation*}
        \delta_{i, T}^{\sf random} = \sum_{t \geq 1} \left(\pi_{\ell_t - 1}^{\sf UCB}(i, S_{\ell_t}) - \sum_{S \in \mathcal{S}}y_{\ell_t}(S)\pi_{\ell_t - 1}^{\sf UCB}(i, S)\right),
    \end{equation*}
    where for each $t \geq 1$, $\pi_{\ell_t - 1}^{\sf UCB}(i, S_{\ell_t}) \in [0,1]$ and $\mathbb{E}\left[\pi_{\ell_t - 1}^{\sf UCB}(i, S_{\ell_t}) | y_{\ell_t}\right] = \sum_{S \in \mathcal{S}}y_{\ell_t}(S)\pi_{\ell_t - 1}^{\sf UCB}(i, S)$. Applying Theorem \ref{theorem:gen_chernoff} on the sequence $\{\pi_{\ell_t - 1}^{\sf UCB}(i, S_{\ell_t})\}_{t \geq 1}$ implies that
    \begin{equation*}
        \mathbb{P}\left(\left|\delta_{i, T}^{\sf random}\right| > 3 \sqrt{C_0  \log T \sum_{t \geq 1}\sum_{S \in \mathcal{S}} y_{\ell_t}(S)\pi_{\ell_t - 1}^{\sf UCB}(i, S)} + 3C_0  \log T\right) \leq \frac{1}{T},
    \end{equation*}
    which completes the proof of Lemma \ref{lem:deltarandommnl}.
    \hfill$\square$
\end{proof}
\noindent \textbf{Proof of Lemma \ref{lem:deltarandommnl2}}
\begin{proof}{Proof}
    By the same argument used in the proof of Lemma \ref{lem:deltarandommnl2}, the following inequality
    \begin{equation*}
        \mathbb{P}\left(\left|\delta_{i, T}^{\sf mnl}\right| > 3\sqrt{ C_0  \log T \left(\sum_{t \geq 1} \pi(i, S_{\ell_t}| \boldsymbol{v}) \right)} + 3C_0  \log T\right) \leq \frac{1}{T}.
    \end{equation*}
    holds with probability at least $1 - \frac{1}{T}$. Conditionally on this event,
    \begin{equation*}
        \left(\sum_{t \geq 1} \pi(i, S_{\ell_t}\right)-\mathcal{I}_{i, T} \geq - 3\sqrt{C_0 \log T} \sqrt{\left(\sum_{t \geq 1} \pi(i, S_{\ell_t}| \boldsymbol{v}) \right)} -  3C_0 \log T,
    \end{equation*}
    which is the same as $\sqrt{\sum_{t \geq 1} \pi(i, S_{\ell_t})}$ is the solution to the following (quadratic) system:
    \begin{equation*}
        X \geq 0, \qquad X^2 + A X - B \geq 0,
    \end{equation*}
        where $A = 3\sqrt{C_0 \log T}$ and $B = \mathcal{I}_{i, T} - 3 C_0 \log T$. If $X$ is a solution, $X$ must satisfy:
    \begin{equation*}
        X \geq \frac{1}{2}(\sqrt{A^2 + 4B} - A),
    \end{equation*}
    which implies
    \begin{equation*}
        X^2 \geq \frac{1}{4}\left(A^2 + 4B + A^2 - 2 A \sqrt{A^2 + 4B}\right) = B + \frac{A^2 - A\sqrt{A^2 + 4B}}{2},
    \end{equation*}
    or equivalently,
    \begin{equation*}
        B - X^2 \leq \frac{A\sqrt{A^2 + 4B} - A^2}{2} \leq A\sqrt{B},
    \end{equation*}
    Hence by using $\delta_{i, T}^{\sf mnl} = B - X^2$ and $A\sqrt{B} = \sqrt{\mathcal{I}_{i, T} - 3C_0 \log T} \leq 3\sqrt{C_0 \log T}\sqrt{\mathcal{I}_{i, T}}$, we obtain:
    \begin{equation*}
        \delta_{i, T}^{\sf MNL} \leq 3\sqrt{C_0 \log T}\sqrt{\mathcal{I}_{i, T}} + 3 C_0 \log T,
    \end{equation*}
    \noindent which completes the proof of Lemma \ref{lem:deltarandommnl2}. 
    \hfill$\square$
    \end{proof}
\subsection{Geometric Chernoff bounds}
\noindent \textbf{Proof of Lemma \ref{lemma:geometricconcentration}.}
\begin{proof}{Proof.}
We fix $i \in [N]$ and set $\Tilde{X}_\ell := \frac{|\mathcal{E}_\ell|}{1 + V(S_\ell)} - 1$. We have:
\begin{align*}
    \sum_{\ell \geq } \frac{|\mathcal{E}_\ell|}{1 + V(S_\ell)}w_{i, \ell} &=  \sum_{\ell \geq} w_{i,\ell}(\Tilde{X}_\ell + 1) \\
    &= \sum_{\ell \geq } w_{\ell - 1}^{i}\Tilde{X}_\ell + \sum_{\ell \geq } w_{i, \ell}.
\end{align*}
For each fixed $\ell \geq 1$, we denote $S_\ell^{\leq}$ the information space available at the realization of the assortment $S_\ell$. In which case $\{\tilde{X}_\ell\}_{\ell \geq 1}$ is $\{S_\ell^{\leq}\}_{\ell \geq 1}$-adapted and $\{w_{\ell-1}^i\}_{\ell \geq 1}$ is $\{S_\ell^{\leq}\}_{\ell \geq 1}$-predictible. is a filtration that is adapted and  we have for each $s \leq \min_{\ell \geq 1} w_{\ell-1}^i$:
\begin{align*}
    \mathbb{E}\left[\exp\left(s w_{\ell - 1}^{i}\Tilde{X}_\ell\right)\bigg|S_\ell^{\leq}\right] &= \exp\left(-s w_{\ell - 1}^{i}\right)\mathbb{E}\left[\exp\left(\frac{s  w_{\ell - 1}^{i}}{V(S_\ell) + 1}|\mathcal{E}_\ell|\right)\bigg| S_{\ell}^{\leq}\right] \\
    &= e^{-s w_{\ell - 1}^{i}} \times \frac{\frac{1}{1 + V(S_\ell)}}{e^{-\frac{s  w_{\ell - 1}^{i}}{V(S_\ell) + 1}} - 1+ \frac{1}{1 + V(S_\ell)}} \\
    &\leq \frac{e^{-s w_{\ell - 1}^{i}} \times \frac{1}{1 + V(S_\ell)}}{-\frac{s  w_{\ell - 1}^{i}}{V(S_\ell) + 1} + \frac{1}{1 + V(S_\ell)}} \\
    &= \frac{\exp(-s w_{\ell - 1}^{i})}{1 - s w_{\ell - 1}^{i}} \\
    &= \exp\left(-s w_{\ell}^i - \log(1- s w_{\ell - 1}^i)\right)\\
    &= \exp\left(\sum_{n \geq 2} s^n (w_{\ell - 1}^i)^n \right) \\
    &= \exp\left(\frac{s^2 (w_{\ell - 1}^i)^2}{1 - s w_{\ell - 1}^i}\right) \\
    &\leq \exp\left(2 s^2 (w_{\ell - 1}^i)^2 \right).
\end{align*}
where the first step stems from the $w_{\ell - 1}^i$ being $S_\ell^{\leq}$-measurable, the second step stems from the MGF of the geometric distribution applied to $|\mathcal{E}_\ell|$ conditionally on $S_\ell^{\leq}$, the third step stems from $e^x \geq x + 1$ for $x \in \mathbb{R}$, the fourth step stems from the simplification of $V(S_\ell) + 1$, and the later steps stem from Taylor expansions of $\log$ and geometric series with $s \leq 1$. Next, we have for $x > 0, s_1, \ldots, s_T \leq \frac{1}{\min_{\ell \geq 1} w_{\ell}^i}$. Since $w_{\ell - 1}^i \leq 1$, we have $\sum_\ell (w_{\ell - 1}^i)^2 \leq T$. Therefore
\begin{align*}
    \mathbb{P}\left(\sum_{\ell \geq 1} w_{\ell - 1}^i \Tilde{X}_\ell \geq x \sqrt{\sum_{\ell \geq 1} (w_{\ell - 1}^i)^2} \right) &\leq \sum_{t = 1}^{T}\mathbb{P}\left(\left(\sum_{\ell \geq 1} w_{\ell - 1}^i \Tilde{X}_\ell \geq x \sqrt{t}\right) \cap \left(t \geq \sum_{\ell \geq 1} (w_{\ell - 1}^i)^2 \right) \right) \\ &= \sum_{t = 1}^T \mathbb{P}\left(\left(\sum_{\ell \geq 1} s_t w_{\ell - 1}^i \Tilde{X}_\ell \geq s_t x \sqrt{t}\right)\cap \left(\sum_{\ell \geq 1} 2 s_t^2 (w_{\ell - 1}^i)^2 \leq 2 s_t^2 t\right) \right)\\
    &\leq \sum_{t = 1}^T \mathbb{P}\left(\sum_{\ell \geq 1}s_t w_{\ell - 1}^i \Tilde{X}_\ell + 2 s_t^2 t \geq s_t x \sqrt{t}+ \sum_{\ell \geq 1} (w_{\ell - 1}^i)^2 \right) \\
    &\leq \sum_{t = 1}^T\exp(-s_t x \sqrt{t} + 2 s_t^2 t )\mathbb{E}\left(\exp\left(\sum_{\ell \geq 1} s_t w_{\ell - 1}^i \Tilde{X}_\ell - 2s_t^2 (w_{\ell - 1}^i)^2\right)\right) \\
    &\leq \sum_{t = 1}^T \exp\left(-s_t x \sqrt{t} + 2 s_t^2 t\right) 
\end{align*}
We choose for each $t \geq 1$, $s_t = \frac{1}{\sqrt{t}} \leq \frac{1}{\min_{\ell \geq 1} w_{\ell - 1}^i}$. We have $-s_t x \sqrt{t} + 2s_t^2 t = - x + 2$. Therefore we have for $x > 0$,
\begin{equation*}
    \mathbb{P}\left(\sum_{\ell \geq 1} w_{\ell - 1}^i \Tilde{X}_\ell \geq x \sqrt{\sum_{\ell \geq 1} (w_{\ell - 1}^i)^2} \right) \leq T \exp\left(-(x - 2)\right) = \exp\left(\log T - (x - 2)\right).
\end{equation*}
For a better control of the right hand side, it is convenient to set $x = 2\log T$, so that

\begin{equation*}
    \mathbb{P}\left(\sum_{\ell \geq 1} w_{\ell - 1}^i \Tilde{X}_\ell \geq 2\log T\sqrt{\sum_{\ell \geq 1} (w_{\ell - 1}^i)^2} \right) \leq \frac{e^{-2}}{T} \leq \frac{8}{T},
\end{equation*}
which proves that first inequality of Lemma \ref{lemma:geometricconcentration} holds with probability at least $1 - \frac{8}{T}$. Deriving the second inequality follows the exact same steps, except that $(1 + V_\ell)\sum_{\ell \geq 1} \epsilon^2_{i, \ell-1} \leq \left(K\right)^2 T\leq N^2 T$. By union bound, the two inequalities hold simultaneously with probability at least $1 - \frac{16}{T}$, which completes the proof of Lemma \ref{lemma:geometricconcentration}.
\hfill$\square$
\end{proof}
\subsection{Properties of the estimates \texorpdfstring{$\vlcb_\ell, \vucb_\ell$}:}\label{app:vcb}
For completeness of the proof, we start by stating these essential concentration results, which were first proven in \cite{DBLP:journals/corr/AgrawalAGZ17a} (Corollary D.1):
\begin{lemma}[From \cite{DBLP:journals/corr/AgrawalAGZ17a}]\label{lemma:meancb}
Consider n i.i.d geometric random variables $X_1, \ldots, X_n$ with distribution $\prob(X_i = m) = (1 - p)^m p , \forall m = \{0, 1, \ldots \}$. Let $\mu = \mathbb{E}(X_i) = \frac{1 - p}{p}$ and $\tilde X  = \frac{\sum_{i = 1}^n X_i}{n}$. For $\mu \leq 1$, we have:
\begin{enumerate}
    \item $\prob\left( |\tilde X - \mu| > \sqrt{\tilde X \frac{48\log\left(\sqrt{N}T^2 + 1\right)}{n}} +\frac{48\log\left(\sqrt{N}T^2 + 1\right)}{n} \right) \leq \frac{6}{N T^2}$
    
    \item $\prob\left( |\tilde X - \mu| > \sqrt{\mu \frac{24 \log\left(\sqrt{N}T^2 + 1\right)}{n}} +\frac{48\log\left(\sqrt{N}T^2 + 1\right))}{n} \right) \leq \frac{4}{N T^2}$
    
    \item $\prob\left( \tilde X  \geq \frac{3 \mu}{2} + \frac{48\log\left(\sqrt{N}T^2 + 1\right)}{n} \right) \leq \frac{3}{N T^2}$
\end{enumerate}
\end{lemma}

\noindent From Lemma \ref{lemma:meancb}, we can derive the following concentration results on our estimators:

\begin{corollary}

\label{cor:vmeanbound}
For each $i$, epoch $\ell$, we have the following:
\begin{enumerate}
    \item $\prob\left(|\tilde v_{i, \ell} - v_i| >  \sqrt{\tilde v_{i, \ell}\frac{48\log\left(\sqrt{N}T^2 + 1\right)}{T_i(\ell)}} +\frac{48\log\left(\sqrt{N}T^2 + 1\right)}{T_i(\ell)} \right) \leq \frac{6}{N T^2}$
    
    \item $\prob\left( |\tilde v_{i, \ell} - v_i| >  \sqrt{v_i \frac{24\log\left(\sqrt{N}T^2 + 1\right)}{T_i(\ell)}} +\frac{48\log\left(\sqrt{N}T^2 + 1\right)}{T_i(\ell)} \right) \leq \frac{4}{N T^2}$
    
    \item $\prob\left( \tilde v_{i, \ell}  > \frac{3 v_i}{2} + \frac{48\log\left(\sqrt{N}T^2 + 1\right)}{T_i(\ell)} \right) \leq \frac{3}{N T^2}$
\end{enumerate}
\end{corollary}

\begin{proof}{Proof.}
The result follows from Lemma \ref{lemma:essential_properties}, and  Lemma \ref{lemma:meancb}. To address the randomness of $T_i(\ell)$, we use the law of total probability conditionally on  $T_i(\ell)$  apply the lemmas for each possible value of the sequence length in $[\ell]$ (since $T_i(\ell) \leq \ell$).
\hfill$\square$
\end{proof}
\noindent \textbf{Proof of Lemma \ref{lemma:vcb}.} We are now ready to prove Lemma \ref{lemma:vcb}.

\begin{proof}{Proof.}

The proof globally follows the proof of Lemma 4.1 in \cite{DBLP:journals/corr/AgrawalAGZ17a}. By Assumption \ref{assumption:vmax}, we already know that $v_i \in [0, 1]$ for all $i$. From Corollary \ref{cor:vmeanbound}, we obtain that for each $i$:

\begin{equation*}
    |v_i - \tilde v_{i, \ell}| \leq  \left( \tilde{v}_{i,\ell} \sqrt{\frac{48\log\left(\sqrt{N}T^2 + 1\right)}{T_i(\ell)}} + \frac{48\log\left(\sqrt{N}T^2 + 1\right)}{T_i(\ell)}\right)\; \; \; \text{w.p. } \geq1 - \tfrac{6}{N T^2}.
\end{equation*}

\noindent Hence, for each $i\in [N], \ell\in [L]$, $v_i \in [\vlcb_{i, \ell}, \vucb_{i, \ell}]$ with probability at least $1 - \tfrac{6}{N T^2}$.

\noindent Moreover,
\begin{align}
    \max\left\{|\vucb_{i, \ell} - v_i|, |\vlcb_{i, \ell} - v_i|\right\} &\leq \max\left\{|\vucb_{i, \ell} - \tilde v_{i, \ell}|, |\vlcb_{i, \ell} - \tilde v_{i, \ell}|\right\} + |\tilde v_{i, \ell} - v_i| \nonumber\\
    &= \sqrt{\tilde{v}_{i,\ell}\frac{48\log\left(\sqrt{N}T^2 + 1\right)}{T_i(\ell)}} + \frac{48\log\left(\sqrt{N}T^2 + 1\right)}{T_i(\ell)} + |\tilde v_{i, \ell} - v_i|\label{eq:max_ucb-vi}.
\end{align}

We can bound each term adequately. For the first two terms of (\ref{eq:max_ucb-vi}), we use the third inequality in Corollary \ref{cor:vmeanbound}, and combine it with the inequality $\sqrt{a} + \sqrt{b} \geq \sqrt{a + b}$:
\small{
\begin{align*}
    \prob\left( \sqrt{48 \tilde v_{i,\ell} \frac{\log\left(\sqrt{N}T^2 + 1\right)}{T_i(\ell)}} + \frac{48 \log\left(\sqrt{N}T^2 + 1\right)}{T_i(\ell)} >  \sqrt{\frac{72 v_i \log\left(\sqrt{N}T^2 + 1\right)}{T_i(\ell)}} + \frac{96 \log\left(\sqrt{N}T^2 + 1\right)}{T_i(\ell)}\right) \leq \frac{3}{N T^2}.
\end{align*}}
\normalsize
\noindent The last term of (\ref{eq:max_ucb-vi}) can be bounded by using the second inequality in \ref{cor:vmeanbound}. Thus, by doing a union bound over the $N L \leq N T$ terms $\{\mathcal{A_{i, \ell}}\}_{i \in [N], \ell \in [L]}$, and by choosing $C_1 = \sqrt{72} + \sqrt{24}$ and $C_2 = 144$, we get:
\begin{equation*}
    \prob(\mathcal{A}^{\sf CB1} \cap \mathcal{A}^{\sf CB2}) \geq 1 - \frac{3 + 4 + 6}{N T^2} N T= 1 - \frac{13}{N T}.
\end{equation*}
\noindent which completes the proof of Lemma  \ref{lemma:vcb}.
\hfill$\square$
\end{proof}
\section{Poof of structural lemmas}\label{app:lemmas}
\noindent\textbf{Proof of Lemma \ref{lem:OPT-LP}.}


\begin{proof}{Proof.}
Consider any non-anticipative algorithm $\mathcal{A}$, and $\{S_t, c_t\}_{t = 1\ldots T}$ be the assortments offered by $\mathcal{A}$ and the customer's purchasing decisions. Note that these might be randomized and $S_t$ might depend on $S_1, c_1, \ldots, S_{t-1}, c_{t-1}$.

\noindent Set $y_S = \frac{1}{T}\sumt \prob(S_t = S)$,  for each $S \in \mathcal{S}$. Clearly, $y \geq 0$. We also have:
\begin{equation*}
    \sumS y_S = \frac{1}{T}\sumt \sumS \prob(S_t = S) = 1 .
\end{equation*}

\noindent Moreover, since the inventory constraints are respected by $\mathcal{A}$, the following inequalities hold for each product $i \in [N]$:
\begin{equation*}
    \sumt \mathbbm{1}_{c_t = i} \leq q_i T.
\end{equation*}

\noindent Thus, they hold in expectation:
\begin{align*}
    q_i T \geq \mathbb{E}_{\mathcal{A}}\left(\sumt \mathbbm{1}_{c_t}\right)= \sumt \mathbb{E}_{\mathcal{A}}\big(\mathbb{E}_{\mathcal{A}}(\mathbbm{1}_{c_t=i}|S_t)\big)= \sumt \mathbb{E}_{\mathcal{A}}\big(\pi(i, S_t)\big)= \sumt \sumS y_S\pi(i, S) = T \sumS y_S \pi(i, S).
\end{align*}

\noindent Hence $y$ is feasible in LP$(\pi)$. Thus, Its associated objective is less than $r_{\sf opt}$:

\begin{equation*}
    r_{\sf opt} \geq \sum_S y_S R(S).
\end{equation*}

\noindent However, $T \sumS y_S R(S)$ is exactly the revenue generated by $\mathcal{A}$, since:

\begin{equation*}
    \mathbb{E}_{\mathcal{A}}\left(\sumt r_{c_t}\right) = \sumt \mathbb{E}_{\mathcal{A}}\big( \mathbb{E}_{\mathcal{A}}(r_{c_t} |S_{t})\big)= \sumt \mathbb{E}_{\mathcal{A}}\big(R(S_t)\big)= T \sumS \left(\frac{1}{T}\sumt \prob(S_t = S)\right)R(S).
\end{equation*}

\noindent This concludes the proof.
\hfill$\square$
\end{proof}
\noindent \textbf{Proof of Lemma \ref{lem:lipshitz}.}
\begin{proof}{Proof.}
For an item $i$, an assortment $S \subset [N]$ with $i \in S$. For convenience, we drop the dependence in $\ell$, and we set:
\begin{equation*}
    \epsilon^+ = v^{\sf UCB} - v \qquad \text{and} \qquad
    \epsilon^- = v - v^{\sf LCB}.
\end{equation*}
We have:
\begin{align*}
    \pi^{\sf UCB}(i, S) - \pi(i, S)  &= \frac{v^{\sf UCB}_{i}}{1 + \sum_{j \in S} v^{\sf LCB}_{j}} - \frac{v_i}{1 + V(S)}\\
    &= \frac{v_i + \epsilon^+_{i}}{1 + V(S) -\epsilon^-(S)} - \frac{v_i}{1 + V(S)} \\
    &= v_i \left(\frac{1}{1 + V(S) -\epsilon^-(S)} - \frac{1}{1 + V(S)}\right) + \frac{\epsilon^+_{i}}{1 + V(S) -\epsilon^-(S)} \\
    &= \frac{v_i\epsilon^-(S)}{\left(1 + V(S)\right)\left(1 + V(S) -\epsilon^-(S)\right)} + \frac{\epsilon^+_{i}}{1 + V(S) -\epsilon^-(S)} \\
    &= \frac{\pi(i, S)\epsilon^-(S)}{1 + V(S) -\epsilon^-(S)} + \frac{\epsilon^+_{i}}{1 + V(S) -\epsilon^-(S)} \\
    &= \frac{\pi(i, S)\epsilon^-(S) + \epsilon^+_{i}}{1 + V(S) -\epsilon^-(S)}.
\end{align*}
Conditionally on $\mathcal{A}^{\sf CB1}$, we have $0 \leq \epsilon^-, \epsilon^+$, therefore:
\begin{equation*}
    \frac{\pi(i, S)\epsilon^-(S) + \epsilon^+_{i}}{1 + V(S) -\epsilon^-(S)} \leq \frac{\pi(i, S)\epsilon^-(S) + \epsilon^+_{i}}{1 + V(S)} = \frac{\left(v^{\sf UCB}_i - v_i\right) + \pi(i, S)\sum_{j \in S}\left(v_j - v^{\sf LCB}_j\right)}{1 + V(S)},
\end{equation*}
\noindent which concludes the proof of Lemma \ref{lem:lipshitz}.
\hfill$\square$
\end{proof}
\begin{lemma}[Improved upper bound]\label{lemma:epsilon}
    Conditionally on $\mathcal{A}^{\sf CB2}$, for each $i \in [N]$ and $\ell \geq 0$ with $i \in S_\ell$, we have
    \begin{equation*}
        \epsilon_{i, \ell} \leq \Tilde{C}_1 v_i \sqrt{\frac{\log{({\sqrt{N}}T^2+1)}}{N_i(\ell)}} + \Tilde{C}_2 v_i \left(\frac{\log{({\sqrt{N}}T^2+1)}}{N_i(\ell)}\right) + \Tilde{C}_3 \left(\frac{\log{({\sqrt{N}}T^2+1)}}{T_i(\ell)}\right)
    \end{equation*}
    where $\Tilde{C}_1, \Tilde{C}_2$ and $\Tilde{C}_3$ are universal constants. In particular, we set $\tilde{C}_1 = 8C_1, \Tilde{C}_2 = 8C_1^2$ and $\Tilde{C}_3 = C_2$.
\end{lemma}
\begin{proof}{Proof.}
    Conditionally on $\mathcal{A}^{\sf CB2}$, we have
    \begin{align*}
       \epsilon_{i, \ell} &\leq C_1 \sqrt{\frac{v_i\log{({\sqrt{N}}T^2+1)}}{T_i(\ell)}} + C_2\left(\frac{\log{({\sqrt{N}}T^2+1)}}{T_i(\ell)}\right) \\
       &= C_1 \sqrt{\log{({\sqrt{N}}T^2+1)}} \sqrt{\frac{v_i \tilde{v}_{i, \ell}}{N_i(\ell)}} + C_2\left(\frac{\log{({\sqrt{N}}T^2+1)}}{T_i(\ell)}\right) \\
       &\leq  C_1 \sqrt{\log{({\sqrt{N}}T^2+1)}}\sqrt{\frac{v_i (v_i + \epsilon_{i, \ell})}{N_i(\ell)}} + C_2\left(\frac{\log{({\sqrt{N}}T^2+1)}}{T_i(\ell)}\right) \\
       &\leq C_1 v_i \sqrt{\frac{\log{({\sqrt{N}}T^2+1)}}{N_i(\ell)}} + C_1 \sqrt{\frac{v_i\log{({\sqrt{N}}T^2+1)}}{N_i(\ell)}}\sqrt{\epsilon_{i, \ell}}+ C_2\left(\frac{\log{({\sqrt{N}}T^2+1)}}{T_i(\ell)}\right)
    \end{align*}
    where the first step stems from Lemma \ref{lemma:vcb}, the second stems from $N_i(\ell) = \Tilde{v}_{i, \ell} T_i(\ell)$, the third step stems from $\Tilde{v}_{i, \ell} \leq \vucb_{i, \ell} \leq v_i + \epsilon_{i, \ell}$, and the last step stems from $\sqrt{a + b} \leq \sqrt{a} + \sqrt{b}$ for $a, b \geq 0$. The last inequality implies that $\sqrt{\epsilon_{i, \ell}}$ is a solution to the following quadratic system:
    \begin{equation}\label{eq:quadratic}
        X^2 - A X - B \leq 0, \qquad X \geq 0,
    \end{equation}
    where
    \begin{align*}
        A = C_1 \sqrt{\frac{v_i\log{({\sqrt{N}}T^2+1)}}{N_i(\ell)}} \quad \text{ and } B = C_1 v_i \sqrt{\frac{\log{({\sqrt{N}}T^2+1)}}{N_i(\ell)}} + C_2\left(\frac{\log{({\sqrt{N}}T^2+1)}}{T_i(\ell)}\right).
    \end{align*}
    The quadratic system \eqref{eq:quadratic} implies $0 \leq X \leq \sqrt{A^2 + 4B} + A \leq A + 2 \sqrt{B} + A = 2(A + \sqrt{B})$. Therefore
    \begin{align*}
        \epsilon_{i, \ell} &\leq \left(2(A + \sqrt{B})\right)^2 \\ &\leq 8(A^2 + B)\\ &=  \frac{8C_1^2 v_i \log{({\sqrt{N}}T^2+1)}}{N_i(\ell)} + 8 C_1 v_i \sqrt{\frac{\log{({\sqrt{N}}T^2+1)}}{N_i(\ell)}}+ C_2\left(\frac{\log{({\sqrt{N}}T^2+1)}}{T_i(\ell)}\right) \\
        &= \Tilde{C}_1 v_i \sqrt{\frac{\log{({\sqrt{N}}T^2+1)}}{N_i(\ell)}} + \Tilde{C}_2 v_i \left(\frac{\log{({\sqrt{N}}T^2+1)}}{N_i(\ell)}\right) + \Tilde{C}_3 \left(\frac{\log{({\sqrt{N}}T^2+1)}}{T_i(\ell)}\right).
    \end{align*}
    where the inequality stems from $(a+b)^2 \leq 2(a^2 + b^2)$ for $a, b \geq 0$ and where we set $\tilde{C}_1 = 8C_1, \Tilde{C}_2 = 8C_1^2$ and $\Tilde{C}_3 = C_2$. This completes the proof of \ref{lemma:epsilon}.
    \hfill $\square$
\end{proof}

\noindent \textbf{Proof of Lemma \ref{lemma:algebraic}.}
\begin{proof}{Proof}
    By setting
    \begin{equation*}
        \Delta =  \sum_{\ell \geq 1}|\mathcal{E}_\ell| \sum_{S \in \mathcal{S}}y_\ell(S)\pi^{\sf UCB}_{\ell-1}(i, S) +  A_{q,2} \sqrt{\sum_{\ell \geq 1} |\mathcal{E}_\ell| \sum_{S \in \mathcal{S}}y_\ell(S)\pi^{\sf UCB}_{\ell-1}(i, S)} + A_{q, 0},
    \end{equation*}
    from Inequality \eqref{lemma:algebraic}, we have $\sqrt{\mathcal{I}_{i, T}}$ the solution to the system
    \begin{equation*}
        X \geq 0, \qquad X^2 - A_{q, 1} X - \Delta \leq 0.
    \end{equation*}
    This system implies that
    \begin{equation*}
        X \leq \frac{1}{2}\left(\sqrt{A_{q, 1}^2 + 4 \Delta} + A_{q, 1}\right),
    \end{equation*}
    which in turn implies, by using $\sqrt{a + b} \leq \sqrt{a} + \sqrt{b}$ for $a, b \geq 0$:
    \begin{align*}
        X^2 &\leq \frac{1}{4}\left(\sqrt{A_{q, 1}^2 + 4 \Delta} + A_{q, 1}\right)^2 \\ &= \Delta + \frac{A^2_{q, 1} + A_{q, 1}\sqrt{A_{q, 1}^2 + 4 \Delta}}{2}  \\ &\leq \Delta + \frac{A^2_{q, 1} + A_{q, 1}\sqrt{A_{q, 1}^2} + A_{q, 1}\sqrt{4 \Delta}}{2} \\ X^2 &\leq \Delta  + A_{q,1}\sqrt{\Delta} + A^2_{q, 1}.
    \end{align*}
    \noindent Hence:
    \begin{align*}
        \mathcal{I}_{i, T} &\leq \Delta  + A_{q,1}\sqrt{\Delta} + A^2_{q, 1} \\
        &= \sum_{\ell \geq 1}|\mathcal{E}_\ell| \sum_{S \in \mathcal{S}}y_\ell(S)\pi^{\sf UCB}_{\ell-1}(i, S) +  A_{q,2} \sqrt{\sum_{\ell \geq 1} |\mathcal{E}_\ell| \sum_{S \in \mathcal{S}}y_\ell(S)\pi^{\sf UCB}_{\ell-1}(i, S)} + A_{q, 0} \\
        &+ A_{q, 1}\sqrt{\sum_{\ell \geq 1}|\mathcal{E}_\ell| \sum_{S \in \mathcal{S}}y_\ell(S)\pi^{\sf UCB}_{\ell-1}(i, S) +  A_{q,2} \sqrt{\sum_{\ell \geq 1} |\mathcal{E}_\ell| \sum_{S \in \mathcal{S}}y_\ell(S)\pi^{\sf UCB}_{\ell-1}(i, S)} + A_{q, 0}} \\
        &+ A_{q, 1}^2 \\
        &\leq \sum_{\ell \geq 1}|\mathcal{E}_\ell| \sum_{S \in \mathcal{S}}y_\ell(S)\pi^{\sf UCB}_{\ell-1}(i, S) \\
        &+ \left(A_{q,2} + A_{q,1}\right)\sqrt{\sum_{\ell \geq 1}|\mathcal{E}_\ell| \sum_{S \in \mathcal{S}}y_\ell(S)\pi^{\sf UCB}_{\ell-1}(i, S)} \\
        &+ \left(A_{q,2} + A_{q,1}\sqrt{A_{q,2}}\right)\sqrt[4]{\sum_{\ell \geq 1}|\mathcal{E}_\ell| \sum_{S \in \mathcal{S}}y_\ell(S)\pi^{\sf UCB}_{\ell-1}(i, S)} \\
        &+ A_{q, 0} + A_{q, 1}\sqrt{A_{q, 0}} + A_{q,1}^2 \\
       &\leq q_i T - \omega_i q_i T  + (A_{q,1}+A_{q,2})\sqrt{q_i T} + \left(A_{q,2} + A_{q,1}\sqrt{A_{q,2}}\right) \sqrt[4]{q_i T} + (A_{q, 0} + A_{q, 1}\sqrt{A_{q, 0}} + A_{q,1}^2) \\
        &\leq q_i T - q_i T\left(\omega_i - \frac{A_1}{\sqrt{q_i T}} - \frac{A_{0}}{q_i T}\right) \\
         \mathcal{I}_{i, T}&\leq q_i T,
    \end{align*}
     where we set 
     \begin{align*}
         A_1 &= 2(A_{q, 1} + A_{q, 2}), \\
         A_0 &= A_{q, 0} + A_{q, 1}\sqrt{A_{q, 0}} + A_{q,1}^2.
     \end{align*}
    The first inequality stems from the previous system, the second inequality stems from $\sqrt{a + b} \leq \sqrt{a} + \sqrt{b}$, the third inequality stems from the feasibility of $y_\ell$ in LP$(\pi^{\sf UCB}_{\ell - 1})$, the fourth inequality stems from $q_i T \geq 1$, and the last inequality stems from the expression of $\omega_i$. This completes the proof of Lemma \ref{lemma:algebraic}.
     \hfill $\square$
\end{proof}

\noindent \textbf{Proof of Lemma \ref{lemma:errors}.}

\begin{proof}
\noindent \textbf{Bounding $\|\boldsymbol{\epsilon}_i\|_2$.}We recall Lemma \ref{lemma:vcb}, which states that conditionally on $\mathcal{A}^{\sf CB2}$, for all $i \in [N]$, $\ell \geq 0$,
\begin{equation*}
     \epsilon_{i, \ell} \leq C_1 \sqrt{\frac{v_i\log{({\sqrt{N}}T^2+1)}}{T_i(\ell)}} + C_2\left(\frac{\log{({\sqrt{N}}T^2+1)}}{T_i(\ell)}\right),
\end{equation*}
which implies that
\begin{equation}\label{eq:epsilonsq}
    \epsilon_{i, \ell}^2 \leq \frac{2C_1^2v_i\log{({\sqrt{N}}T^2+1)}}{T_i(\ell)} + 2C_2^2\left(\frac{\log{({\sqrt{N}}T^2+1)}}{T_i(\ell)}\right)^2,
\end{equation}
where we used $(a+b)^2 \leq 2(a^2 + b^2)$ for $a, b \geq 0$. Summing the previous inequality over $\ell \geq 0$ and using $v_i \leq 1$ implies
\begin{align*}
   \sum_{\ell \geq 0}\epsilon_{i, \ell}^2 &\leq 2C_1^2 \log\left(\sqrt{N}T^2 + 1\right)v_i \sum_{t = 1}^{T_i(L)}\frac{1}{t}+ 2C_2^2\log^2\left(\sqrt{N}T^2+ 1\right)\sum_{t = 1}^{T_i(L)}\frac{1}{t^2} \\ &\leq 2C_1^2 \log\left(\sqrt{N}T^2 + 1\right)v_i \times 2 \log(T_i(L))+ 2C_2^2\log^2\left(\sqrt{N}T^2+ 1\right)\times \frac{\pi^2}{6} \\
   &\leq 4(C_1^2 + C_2^2)\log^2\left(\sqrt{N}T^2 + 1\right),
\end{align*}
hence $ \|\boldsymbol{\epsilon}_i\|_2 \leq \sqrt{\sum_{\ell \geq 0} \epsilon_{i, \ell}^2} \leq 2\sqrt{C_1^2 + C_2^2}\log\left(\sqrt{N}T^2 + 1\right)$.

\noindent \textbf{Bounding $\|\boldsymbol{w}_i\|_2$.} To derive a bound on the $2-$norm of $\boldsymbol{w}_i$, we first need to bound $w_{i, \ell}^2$. To do so, we derive an intermediate bound on $w_{i, \ell}$. On the one hand, we have by applying Cauchy Shwartz inequality:
 \begin{align*}
   \sum_{j \in S_\ell} \epsilon_{j, \ell} &\leq C_1 \sqrt{\log{({\sqrt{N}}T^2+1)}}\sum_{j \in S_\ell} \sqrt{\frac{v_i}{T_i(\ell)}} + C_2\log{({\sqrt{N}}T^2+1)}\sum_{j \in S_\ell} \left(\frac{1}{T_i(\ell)}\right) \\
    &\leq C_1 \sqrt{\log(\sqrt{N}T^2 + 1)} \sqrt{\sum_{j \in S_\ell} v_j} \sqrt{\sum_{j \in S_\ell} \frac{1}{T_j(\ell)}} + C_2 \log(\sqrt{N}T^2 + 1) \sum_{j \in S_\ell} \frac{1}{T_j(\ell)}.
 \end{align*}
 On the other hand, notice that
 \begin{equation*}
     \pi(i, S_\ell)\sqrt{\sum_{j \in S_\ell} v_j} \leq \pi(i, S_\ell)\sqrt{1 + \sum_{j \in S_\ell} v_j} = \pi(i, S_\ell)\sqrt{\frac{v_i}{\pi(i, S_\ell)}} = \sqrt{v_i \pi(i, S_\ell)} \leq v_i.
 \end{equation*}
Therefore, by multiplying $\pi(i, S_\ell)$ in both sides in the previous inequality we obtain
 \begin{equation*}
     \pi(i, S_\ell) \sum_{j \in S_\ell} \epsilon_{j, \ell} \leq C_1 \sqrt{\log(\sqrt{N}T^2 + 1)} v_i \sqrt{\sum_{j \in S_\ell} \frac{1}{T_j(\ell)}} + C_2 \log(\sqrt{N}T^2 + 1) \sum_{j \in S_\ell} \frac{\pi(i, S_\ell)}{T_j(\ell)},
 \end{equation*}
 which yields the following intermediate bound on $w_{i, \ell} = \epsilon_{i, \ell} + \pi(i, S_\ell)\sum_{j \in S_\ell} \epsilon_{j, \ell}$ for each with $i \in S_\ell$:
 \small
 \begin{equation}\label{eq:wintermediate}
     w_{i, \ell} \leq C_1\sqrt{\log\left(\sqrt{N}T^2 + 1\right)}\left(\sqrt{\frac{v_i}{T_i(\ell)}} + v_i\sqrt{\sum_{j \in S_\ell} \frac{1}{T_j(\ell)}}\right) + C_2\log\left(\sqrt{N}T^2 + 1\right) \left(\frac{1}{T_i(\ell)} + \sum_{j \in S_\ell} \frac{1}{T_j(\ell)}\right).
 \end{equation}
 \normalsize
 By applying $(a+b)^2 \leq 2(a^2 + b^2)$ twice on Equation \eqref{eq:wintermediate}, we obtain for every $\ell \geq 0$ and $i \in S_\ell$:
 \begin{align*}
     w_{i, \ell}^2 &\leq 4C_1^2 \log\left(\sqrt{N}T^2 + 1\right)\left(\frac{v_i}{T_i(\ell)} + v_i^2 \sum_{j \in S_\ell} \frac{1}{T_j(\ell)}\right) + 4C_2^2 \log^2\left(\sqrt{N}T^2 + 1\right) \left(\frac{1}{T^2_i(\ell)} + \left(\sum_{j \in S_\ell} \frac{1}{T_j(\ell)}\right)^2\right).
 \end{align*}
 By summing over $\ell \geq 0$ we obtain,
 \begin{align*}
     &\sum_{\ell \geq 0} \frac{\mathbbm{1}_{i \in S_\ell}}{T_i(\ell)} \leq \sum_{t = 1}^{T_i} \frac{1}{t} \leq 2\log T_i \leq 2\log T \leq \log\left(\sqrt{N}T^2 + 1\right), \\
     &\sum_{\ell \geq 0}\sum_{j \in S_\ell}\frac{1}{T_j(\ell)} = \sum_{j \in [N]}\sum_{\ell \geq 0} \frac{\mathbbm{1}_{j \in S_\ell}}{T_j(\ell)} \leq 2 N \log T \leq N \log\left(\sqrt{N}T^2 + 1\right), \\
     &\sum_{\ell \geq 0}\frac{\mathbbm{1}_{i\in S_\ell}}{T^2_i(\ell)}  \leq \sum_{t \geq 0} \frac{1}{t^2} =\frac{\pi^2}{6} \leq 10/6 \leq 2, \\
     &\sum_{\ell \geq 0}\left(\sum_{j \in S_\ell} \frac{1}{T_j(\ell)}\right)^2 \leq \sum_{\ell \geq 0} \left(\sum_{j \in S_\ell} 1 \right)\left(\sum_{j \in S_\ell}\frac{1}{T^2_j(\ell)}\right) \leq \left(K\right) \sum_{\ell \geq 0} \sum_{j \in S_\ell} \frac{1}{T^2_j(\ell)} \leq 2 N K.
 \end{align*}
 Therefore, by recalling that $\boldsymbol{v} \leq 1$, we obtain
 \begin{equation*}
     \|\boldsymbol{w}_i\|^2_2 = \sum_{\ell \geq 0} w_{i, \ell}^2 \leq 4\log^2\left(\sqrt{N}T^2 + 1\right)\left(C_1^2(N+1) + 2C_2^2(1 + N K)\right),
 \end{equation*}
 which implies by applying $\sqrt{a + b} \leq \sqrt{a} + \sqrt{b}$ for $a, b \geq 0$:
 \begin{align*}
       \|\boldsymbol{w}_i\|_2 &\leq 2\log\left(\sqrt{N}T^2 + 1\right)\sqrt{\left(C_1^2(N+1) + 2C_2^2(1 + N K)\right)} \\
        &\leq 2\log\left(\sqrt{N}T^2 + 1\right) \left(\sqrt{C_1^2(N+1)} + \sqrt{2C_2^2(1 + N K)}\right) \\&\leq 2\sqrt{2}(C_1+ C_2\sqrt{2})\log\left(T^4 + 1\right)\sqrt{NK)},
 \end{align*}
 which concludes the proof of bounding $\|\boldsymbol{w}_i\|_2$. In order to bound the $\|\boldsymbol{\epsilon}_i\|_1$ and $\|\boldsymbol{w}_i\|_1$, we will need to improve the bound on $\epsilon_{i, \ell}$ given in Lemma \ref{lemma:vcb}. In particular, we need a better scaling in $v_i$. This is formalized in Lemma \ref{lemma:epsilon}.
 
\noindent \textbf{Bounding $\|\boldsymbol{\epsilon}_i\|_1$.} From Lemma \ref{lemma:epsilon}, we have
\begin{align*}
    \sum_{\ell \geq 0} \epsilon_{i, \ell}
    &\leq \Tilde{C}_1  v_i\sqrt{\log\left(\sqrt{N}T^2 + 1\right)}\sum_{\ell \geq 0}\frac{\mathbbm{1}_{i \in S_\ell}}{\sqrt{N_i(\ell)}} &(a)\\
    &+ \Tilde{C}_2 v_i \log\left(\sqrt{N}T^2 + 1\right) \sum_{\ell \geq 0}\frac{\mathbbm{1}_{i \in S_\ell}}{N_i(\ell)}&(b)\\
    &+  \Tilde{C}_3 \log\left(\sqrt{N}T^2 + 1\right) \sum_{\ell \geq 0}\frac{\mathbbm{1}_{i \in S_\ell}}{T_i(\ell)},&(c)
\end{align*}
We bound each of the three terms separately. First, we have
\begin{equation*}
    (a): \sum_{\ell \geq 0}\frac{\mathbbm{1}_{i \in S_\ell}}{\sqrt{N_i(\ell)}} \leq \sum_{t = 1}^{N_i(L)}\frac{1}{\sqrt{t}} \leq 2 \sqrt{N_i(L)} \leq 2\sqrt{T},
\end{equation*}
Second,
\begin{equation*}
    (b) : \sum_{\ell \geq 0}\frac{\mathbbm{1}_{i \in S_\ell}}{N_i(\ell)} \leq  \sum_{t = 1}^{N_i(L)}\frac{1}{t} \leq 2\log N_i(L) \leq 2\log T. \text{ Similarly, } (c): \sum_{\ell \geq 9}\frac{\mathbbm{1}_{i \in S_\ell}}{T_i(\ell)} \leq  \sum_{t = 1}^{T_i(L)}\frac{1}{t} \leq 2\log T_i(L) \leq 2\log T.
\end{equation*}
Therefore,
\begin{align*}
     \|\boldsymbol{\epsilon}_i\|_1 &= \sum_{\ell \geq 0} \epsilon_{i, \ell}\\ &\leq 2\Tilde{C}_1 v_i\sqrt{\log\left(\sqrt{N}T^2 + 1\right)T} + \Tilde{C}_2 v_i \log\left(\sqrt{N}T^2 + 1\right)\log T + \Tilde{C}_3 \log\left(\sqrt{N}T^2 + 1\right) \log T \\
     &\leq 2\Tilde{C}_1 v_i\sqrt{\log\left(\sqrt{N}T^2 + 1\right)T} + (\Tilde{C}_2v_i + \Tilde{C}_3)\log^2\left(\sqrt{N}T^2 + 1\right),
\end{align*}
which provides the bound for $ \|\boldsymbol{\epsilon}_i\|_1$. 

\noindent \textbf{Bounding $\|\sqrt{\boldsymbol{\epsilon}_i}\|_1$.} 

Similarly to how we bounded $ \|\boldsymbol{\epsilon}_i\|_1$, we have:
\begin{align*}
    \sum_{\ell \geq 0} \sqrt{\epsilon_{i, \ell}}
    &\leq \sqrt{\Tilde{C}_1  v_i\sqrt{\log\left(\sqrt{N}T^2 + 1\right)}}\sqrt{\sum_{\ell \geq 0}\frac{\mathbbm{1}_{i \in S_\ell}}{\sqrt{N_i(\ell)}}}\\
    &+ \sqrt{\Tilde{C}_2 v_i \log\left(\sqrt{N}T^2 + 1\right)} \sqrt{\sum_{\ell \geq 0}\frac{\mathbbm{1}_{i \in S_\ell}}{N_i(\ell)}}\\
    &+  \sqrt{\Tilde{C}_3 \log\left(\sqrt{N}T^2 + 1\right)}\sqrt{\sum_{\ell \geq 0}\frac{\mathbbm{1}_{i \in S_\ell}}{T_i(\ell)}} \\
    &\leq \sqrt{2\Tilde{C}_1  \sqrt{\log\left(\sqrt{N}T^2 + 1\right)}}\sqrt{v_i} \sqrt[4]{T} + \left(\sqrt{2\Tilde{C}_2} + \sqrt{2\Tilde{C}_3}\right) \sqrt{\log\left(\sqrt{N}T^2 + 1\right)\log T}
\end{align*}
where we used $\sqrt{a + b + c} \leq \sqrt{a} + \sqrt{b} + \sqrt{c}$ along with the inequalities used to derive the upper bound on $\|\boldsymbol{\epsilon}_i\|_1$.

It remains to bound $\|\boldsymbol{w}\|_1$. This term eventually represents the highest rate in the regret. Consequently, bounding it requires extra care.

\noindent \textbf{Bounding $\|\boldsymbol{w}_i\|_1$.} We have by definition of $\boldsymbol{w}$:
\begin{equation*}
    \|\boldsymbol{w}_i\|_1 = \sum_{\ell \geq 0} \epsilon_{i, \ell} + \sum_{\ell \geq 0}\pi(i, S_\ell)\sum_{j \in S_\ell}\epsilon_{j, \ell}
\end{equation*}
We've already established an upper bound on the first sum:
\begin{equation*}
    \sum_{\ell \geq 0} \epsilon_{i, \ell}  = \|\boldsymbol{\epsilon}_i\|_1\leq 2\Tilde{C}_1 v_i\sqrt{\log\left(\sqrt{N}T^2 + 1\right)T} + (\Tilde{C}_2v_i + \Tilde{C}_3)\log^2\left(\sqrt{N}T^2 + 1\right).
\end{equation*}
We focus on the second sum $\sum_{\ell \geq 0}\pi(i, S_\ell)\sum_{j \in S_\ell}\epsilon_{j, \ell}$. From Lemma \ref{lemma:epsilon}, we have
\begin{align*}
    \sum_{\ell \geq 0}\pi(i, S_\ell)\sum_{j \in S_\ell}\epsilon_{j, \ell} &\leq \Tilde{C}_1 v_i\sqrt{\log\left(\sqrt{N}T^2 + 1\right)}\sum_{\ell \geq 0}\sum_{j \in S_\ell} \pi(j, S_\ell) \sqrt{\frac{\mathbbm{1}_{i\in S_\ell}}{N_j(\ell)}} &(a)\\
    &+\Tilde{C}_2 v_i\log\left(\sqrt{N}T^2 + 1\right)\sum_{\ell \geq 0}\sum_{j \in S_\ell}  \pi(j, S_\ell)\left(\frac{\mathbbm{1}_{i\in S_\ell}}{N_j(\ell)}\right)  &(b)\\&+ \Tilde{C}_3\log\left(\sqrt{N}T^2 + 1\right)\sum_{\ell \geq 0}\sum_{j \in S_\ell}\frac{\mathbbm{1}_{i \in S_\ell}}{T_j(\ell)}, &(c)
\end{align*}
where we used $v_i \pi(j, S) = v_j \pi(i, S)$ for $i, j \in S$.
We once again bound each of the three sums separately. First, we have
\begin{align*}
    (a): \sum_{\ell \geq 0}\sum_{j \in S_\ell} \pi(j, S_\ell) \sqrt{\frac{\mathbbm{1}_{i\in S_\ell}}{N_j(\ell)}} &= \sum_{j \in [N]}v_j\sum_{\ell \geq 0} \pi(0, S_\ell) \sqrt{\frac{\mathbbm{1}_{i\in S_\ell}}{N_j(\ell)}} \\ &\leq \sum_{j \in [N]} 2 v_j \sqrt{N_j(L)}\\ &\leq 2 \sqrt{\sum_{j \in [N]}v_j^2} \sqrt{\sum_{j \in [N]}N_j(L)}\\ &\leq 2\|\boldsymbol{v}\|_2\sqrt{T}.
\end{align*}
where the first step stems from $\pi(j, S_\ell) = v_j \pi(0, S_\ell) \leq \frac{v_j}{1 + v_j} \leq v_j$, where the second step stems from $\sum_{x = 1}^X \frac{1}{\sqrt{x}} \leq 2 \sqrt{X}$, where the third step stems from Cauchy Shwartz, and where the last step stems from $N_0(L) + \sum_{j \in [N]} N_j(L) \leq T$ (we offer at most one product per time step). In a similar fashion, we have
\begin{align*}
    (b): \sum_{\ell \geq 0}\sum_{j \in S_\ell}  \pi(j, S_\ell)\left(\frac{\mathbbm{1}_{i\in S_\ell}}{N_j(\ell)}\right) &= \sum_{j \in [N]} v_j \sum_{\ell \geq 0}\frac{\pi(0, S_\ell)\mathbbm{1}_{i \in S_\ell}}{N_i(\ell)} \\
    &\leq \sum_{j \in [N]} v_j (\log(N_j(L)) + 1)\\ &\leq \|\boldsymbol{v}\|_1 \log (T+1) \leq 2  \|\boldsymbol{v}\|_1 \log T.
\end{align*}
The third sum is upper bounded as follows
\begin{align*}
    (c): \sum_{\ell \geq 0}\sum_{j \in S_\ell} \frac{\mathbbm{1}_{i \in S_\ell}}{T_j(\ell)} = \sum_{j \in [N]} \sum_{\ell \geq 0}\frac{\mathbbm{1}_{i,j \in S_\ell}}{T_j(\ell)} \leq \sum_{j \in [N]} 2\log T_j(L) \leq 2 N \log T.
\end{align*}
Combining all previous three inequalities yields the following upper bound on $\|\boldsymbol{w}_i\|_1$:
\begin{align*}
    \|\boldsymbol{w}_i\|_1 &\leq 2\Tilde{C}_1\sqrt{\log\left(\sqrt{N}T^2 + 1\right)} v_i \|\boldsymbol{v}\|_2\sqrt{T} + \log^2\left(\sqrt{N}T^2 + 1\right)\left(2\Tilde{C}_2 v_i\|\boldsymbol{v}\|_1 + 2\Tilde{C}_3 N\right).
\end{align*}
which completes the proof of Lemma \ref{lemma:errors}.
\hfill $\square$
\end{proof}

\noindent \textbf{Proof of Lemma \ref{lemma:firstterm}}
\begin{proof}{Proof.}
We construct an alternative distribution $\Tilde{y}_{\ell}$, defined as a solution to the following system:
\begin{equation}\label{eq:ytilde}
\forall i \in [N], \quad \sum_{S \in \mathcal{S}} \Tilde{y}_{\ell}(S) \pi^{\sf OPT}_{\ell - 1}(i, S) = (1 - \omega_{i})\sum_{S \in \mathcal{S}} y^{\sf OPT}(S)\pi(i, S).
\end{equation}
The existence of $\Tilde{y}$ is due to the $\pi^{\sf UCB} \geq \pi$, the small size of the support of $y^{\sf OPT}$, and $1 - \omega_i < 1$. Notice that from the system \eqref{eq:ytilde} and the feasibility of $y^{\sf OPT}$ in LP$(\pi)$, the new distribution $\Tilde{y}$ is feasible in LP$(\pi^{\sf UCB}_{\ell - 1})$. This way, conditionally on $\mathcal{A}$, and for a fixed $\ell \geq 1$:
\small
\begin{equation*}
     \sum_{i \in [N]}r_i \sum_{S \in \mathcal{S}} y_\ell(S) \pi^{\sf UCB}_{\ell-1}(i,S) \geq  \sum_{i \in [N]} r_i \sum_{S \in \mathcal{S}} \tilde{y}_\ell(S)\pi^{\sf UCB}_{\ell-1}(i, S) = \sum_{i \in [N]}(1 - \omega_i) r_i \sum_{S \in \mathcal{S}} y^{\sf OPT}(S)\pi(i, S) = r_{\sf opt} - r_{\sf disc}.
\end{equation*}
\normalsize
The first step stems from the feasibility of $\Tilde{y}_\ell$ in \text{LP$(\pi^{\sf UCB}_{\ell-1})$} combined with the optimality of $y_\ell$, and the second step stems from the definition of $\Tilde{y}_\ell$. This completes the proof of Lemma \ref{lemma:firstterm}.
\hfill $\square$
\end{proof}

\subsection{Details of calculations.}\label{app:calculations}

\noindent \textbf{Total magnitude of the noise.}
We first apply Lemma \ref{lem:deltarandommnl} and Corollary \ref{eq:shifterror}. Conditionally on $\mathcal{A}$, we have
\small{
\begin{align*}
    \left|\delta_{i, T}^{\sf random}\right| + \left|\delta_{i, T}^{\sf mnl}\right| + \left|\delta_{i, T}^{\sf shift}\right| &\leq 9\sqrt{C_0 v_i T \log T} + 9C_0 \log T +3\sqrt{C_0  \log T \left(\sum_{\ell \geq 1}|\mathcal{E}_{\ell}| \epsilon_{i, \ell-1}\right)} + \sum_{\ell \geq 1}\frac{|\mathcal{E}_{\ell}|}{1 + V(S_\ell)}w_{i, \ell-1} \\
    &\leq 9\sqrt{C_0  v_i T \log T} + 9C_0 \log T \\ &+3\sqrt{C_0  \log T\left(K\right)(\left(2\log(N T)\|\boldsymbol{\epsilon}_i\|_2 + \|\boldsymbol{\epsilon}_i\|_1)\right)} \\ &+ 2\log T\|\boldsymbol{w}_i\|_2 + \|\boldsymbol{w}_i\|_1 \\
    &\leq  9\sqrt{C_0  v_i T \log T} + 9C_0 \log T &(a)\\ &+ 3\sqrt{C_0  \log T\left(K\right)}\left(\sqrt{2\log(N T)\|\boldsymbol{\epsilon}_i\|_2} + \sqrt{\|\boldsymbol{\epsilon}_i\|_1)}\right) &(b)\\ &+ 2\log T\|\boldsymbol{w}_i\|_2 + \|\boldsymbol{w}_i\|_1. &(c)
\end{align*}
}
\normalsize
We separated the different terms into three lines ($(a), (b)$ and $(c)$), and the next steps will consist upper bounding each line separately. To simplify notations, we will introduce near-constants that will encompass numerical constants and log factors in $N$ and $T$. We set:
\begin{equation*}
    A_{0,1} := 9C_0  \log T, \qquad A_{1,1} := 9\sqrt{C_0  \log T}.
\end{equation*}
This way 
\begin{equation*}
    (a) =  9\sqrt{C_0  v_i T \log T} + 9C_0 \log T = A_{0, 1} + A_{1, 1}\sqrt{v_i T}.
\end{equation*}
Bounding $(b)$ and $(c)$ requires Lemma \ref{lemma:errors}. First, by using $\sqrt{a+b} \leq \sqrt{a} + \sqrt{b}$ for $a, b \geq 0$, we have:
\begin{align*}
    \sqrt{\|\boldsymbol{\epsilon}_i\|_2} &\leq \sqrt{\sqrt{2C_1^2 + 4C_2^2}\log\left(\sqrt{N}T^2 + 1\right)}\\ &= \sqrt[4]{2C_1^2 + 4C_2^2}\sqrt{\log\left(\sqrt{N}T^2 + 1\right)}, \\
    \sqrt{\|\boldsymbol{\epsilon}_i\|_1} &\leq \sqrt{2\Tilde{C}_1 v_i\sqrt{\log\left(\sqrt{N}T^2 + 1\right)N_i} + (\Tilde{C}_2v_i + \Tilde{C}_3)\log^2\left(\sqrt{N}T^2 + 1\right)} \\
    &\leq\sqrt{2\Tilde{C}_1 v_i\sqrt{\log\left(\sqrt{N}T^2 + 1\right)N_i}} + \sqrt{(\Tilde{C}_2v_i + \Tilde{C}_3)\log^2\left(\sqrt{N}T^2 + 1\right)} \\
    &\leq \sqrt{2\Tilde{C}_1} \sqrt{v_i}\sqrt[4]{\log\left(\sqrt{N}T^2 + 1\right)T} + \sqrt{(\Tilde{C}_2 + \Tilde{C}_3)\log^2\left(\sqrt{N}T^2 + 1\right)}.
\end{align*}
where the last inequality stems from $\boldsymbol{v} \leq 1$ and $N_i \leq T$. Therefore, by choosing
\begin{align*}
    A_{0, 2} &= 3\sqrt{C_0  \log T\left(\log(NT)\right)}\sqrt[4]{2C_1^2 + 4C_2^2}\sqrt{\log\left(\sqrt{N}T^2 + 1\right)}, \\
    A_{0, 3} &= 3\sqrt{C_0  \log T\left(2\log(NT)\right)}\sqrt{\Tilde{C}_2 + \Tilde{C}_3}\log\left(\sqrt{N}T^2 + 1\right), \\
    A_{1/2} &=  3\sqrt{C_0  \log T\left(2\log(NT)\right)} \sqrt{2\Tilde{C}_1}\sqrt[4]{\log\left(\sqrt{N}T^2 + 1\right)}.
\end{align*}
\normalsize
we obtain 
\begin{align*}
  (b) &=  3\sqrt{C_0  \log T\left(K\right)}\left(\sqrt{2\log(N T)\|\boldsymbol{\epsilon}_i\|_2} + \sqrt{\|\boldsymbol{\epsilon}_i\|_1)}\right) \\ &\leq A_{0,2}+A_{0,3}\sqrt{K} + A_{1/2}\sqrt{v_i}\sqrt{K}\sqrt[4]{T} \\
   &\leq A_{0,2}+A_{0,3}\sqrt{N} + A_{1/2}\sqrt{v_i}\sqrt{K}\sqrt[4]{T}.
\end{align*}
We now bound $(c) = 2\log T \|\boldsymbol{w}_i\|_2 + \|\boldsymbol{w}_i\|_1$. By choosing:
\begin{align*}
    A_{0, 4} &:= \left(2\Tilde{C}_2 + 2\Tilde{C}_3 \right)\log^2\left(\sqrt{N}T^2 + 1\right), \\
    A_{0, 5} &:= 4\sqrt{2}(C_1+ C_2\sqrt{2})\log T\log\left(N T^2 + 1\right),\\
    A_{1, 2} &:= 2\Tilde{C}_1 \sqrt{\log\left(\sqrt{N}T^2 + 1\right)}.
\end{align*}
we obtain from Lemma \ref{lemma:errors} that conditionally on $\mathcal{A}$:
\begin{equation*}
    (c) = 2\log T \|\boldsymbol{w}_i\|_2 + \|\boldsymbol{w}_i\|_1 \leq \left(A_{0, 4} + A_{0, 5}\right)N + A_{1,2}v_i \|\boldsymbol{v}\|_2 \sqrt{T}.
\end{equation*}
Combining all previous inequalities implies that conditionally on $\mathcal{A}$,
\small
\begin{align*}
    \left|\delta_{i, T}^{\sf random}\right| + \left|\delta_{i, T}^{\sf mnl}\right| + \left|\delta_{i, T}^{\sf shift}\right| &\leq (a) + (b) + (c) \\
    &\leq \left(A_{0, 1} + A_{0, 2}\right) +  (A_{0, 3} + A_{0, 4} + A_{0, 5})N + A_{1/2}\sqrt{v_i}\sqrt{K}\sqrt[4]{T} + (A_{1, 1} +A_{1,2})v_i \|\boldsymbol{v}\|_2\sqrt{T}.
\end{align*}
\normalsize
Therefore, by setting $A^{\sf t}_{0} = A_{0, 1} + A_{0,2} + A_{0,3} + A_{0,4} + A_{0,5}$ and $A^{\sf t}_1 := A_{1, 1} +A_{1,2}+A_{1,3}$, we obtain conditionally on $\mathcal{A}$,
\begin{equation*}
    \left|\delta_{i, T}^{\sf random}\right| + \left|\delta_{i, T}^{\sf mnl}\right| + \left|\delta_{i, T}^{\sf shift}\right| \leq A^{\sf t}_0 N+ A_{1/2} \sqrt{v_i}\sqrt{K}\sqrt[4]{T} + A^{\sf t}_1 v_i \|\boldsymbol{v}\|_2\sqrt{T},
\end{equation*}
which bounds the total magnitude of the noise.

\noindent \textbf{Lower bounding $\sum_{i \in [N]} r_i \delta_{i, T}^{\sf shift}$:}

 \noindent We start by applying the tower rule and $\mathbb{E}[|\mathcal{E}_\ell||S_\ell] = V(S_\ell) + 1$,
\begin{align*}
    \mathbb{E}\left[\sum_{i \in [N]} r_i \delta_{i, T}^{\sf shift}\right] &= \mathbb{E}\left[\sum_{i \in [N]}\sum_{\ell \geq 1}r_i (V(S_\ell) + 1)(\pi(i, S_\ell) - \pi^{\sf UCB}(i, S_\ell)\right] \\ 
    &= \mathbb{E}\left[\sum_{S \in \mathcal{S}}\sum_{\ell \geq 1} \sum_{i \in [N]}  r_i y_\ell(S)(V(S) + 1)(\pi(i, S) - \pi^{\sf UCB}(i, S)\right],
\end{align*}
Conditionally on $\mathcal{A} \subset \mathcal{A}^{\sf CB1}$, we have:
\begin{align*}
    \sum_{S \in \mathcal{S}}\sum_{\ell \geq 1} \sum_{i \in [N]}  r_i y_\ell(S)(V(S) + 1)|\pi(i, S) - \pi^{\sf UCB}(i, S)| &\leq \sum_{S \in \mathcal{S}}\sum_{\ell \geq 1} \sum_{i \in [N]}  r_i y_\ell(S)\left(\epsilon_{i, \ell - 1} + \pi(i, S)\sum_{j \in S} \epsilon_{j, \ell - 1}\right) \\
    &= \sum_{\ell \geq 1}\sum_{S \in \mathcal{S}}y_\ell(S) \left(\sum_{i \in [N]} r_i \epsilon_{i, \ell - 1} + r(S)\epsilon_{\ell - 1}(S)\right) \\
    &= \sum_{\ell \geq 1}\sum_{i \in [N]} r_i \epsilon_{i, \ell - 1} + \sum_{\ell \geq 1}\sum_{S \in \mathcal{S}}y_\ell(S) r(S) \epsilon_{\ell - 1}(S) \\
    &\leq \sum_{\ell \geq 1}\sum_{i \in [N]} r_i \epsilon_{i, \ell - 1} + r_{\sf opt} \sum_{i \in [N]}\sum_{\ell \geq 1} \epsilon_{i, \ell - 1},
\end{align*}
\noindent where the first inequality stems from Lemma \ref{lem:lipshitz}, and the second inequality stems from $\epsilon_{\ell - 1}(S) \leq \sum_{\ell \geq 1}\epsilon_{i, \ell - 1}$ and $\sum_{S \in \mathcal{S}} y_\ell(S)r(S) \leq r_{\sf opt}$. By using Lemma \ref{lemma:errors}, we have conditionally on $\mathcal{A} \subset \mathcal{A}^{\sf CB2}$, by Cauchy-Schwartz inequality:
\begin{align*}
     \sum_{\ell \geq 1}\sum_{i \in [N]} r_i \epsilon_{i, \ell - 1} &\leq 2\Tilde{C}_1\sqrt{\log\left(\sqrt{N}T^2 + 1\right)} \|\boldsymbol{v}\cdot\boldsymbol{r}\|_2 \sqrt{T} + (\Tilde{C}_2 + \Tilde{C}_3)\|\boldsymbol{r}\|_1, \\
     \sum_{\ell \geq 1}\sum_{i \in [N]}\epsilon_{i, \ell - 1} &\leq 2\Tilde{C}_1\sqrt{\log\left(\sqrt{N}T^2 + 1\right)} \|\boldsymbol{v}\|_2 \sqrt{T} + (\Tilde{C}_2 + \Tilde{C}_3)N.
\end{align*}
where $\boldsymbol{v}\cdot\boldsymbol{r} = (r_i v_i)_{i \in [N]}$. Therefore, conditionally on $\mathcal{A}$
\begin{equation*}
   \sum_{i \in [N]} r_i \delta_{i, T}^{\sf shift} \geq -2\Tilde{C}_1\sqrt{\log\left(\sqrt{N}T^2 + 1\right)}\left(\|\boldsymbol{v}\|_2 r_{\sf opt} + \|\boldsymbol{v}\cdot\boldsymbol{r}\|_2\right) \sqrt{T} - (\Tilde{C}_2 + \Tilde{C}_3)(N r_{\sf opt} + \|\boldsymbol{r}\|_1),
\end{equation*}
which provides the desired lower bound.

\noindent \textbf{Deriving the exact regret bound:}

We have:
\begin{align*}
        \mathbb{E}\left[\sum_{\ell \geq 1} |\mathcal{E}_\ell| \sum_{i \in [N]}r_i \sum_{S \in \mathcal{S}} y_\ell(S) \pi^{\sf UCB}_{\ell-1}(i,S)\bigg| \mathcal{A}\right]\mathbb{P}(\mathcal{A}) &\geq \sum_{\ell \geq 1}|\mathcal{E}_\ell|\left((r_{\sf opt} - r_{\sf disc}\right)\left(1 - \frac{15}{T}\right) \\
        &= T r_{\sf opt} - T r_{\sf disc} - 15(r_{\sf opt} -  r_{\sf disc}) \\
        &\geq T r_{\sf opt} - r_{\sf inv} \sqrt{T} - 15r_{\sf opt}. \\
        \mathbb{E}\left[\sum_{i \in [N]} r_i \delta_{i, T}^{\sf shift}\bigg|\mathcal{A}\right]\mathbb{P}(\mathcal{A}) &\geq -2\Tilde{C}_1\sqrt{\log\left(\sqrt{N}T^2 + 1\right)}\left(\|\boldsymbol{v}\|_2 r_{\sf opt}+ \|\boldsymbol{v}\cdot\boldsymbol{r}\|_2\right) \sqrt{T} \\ &- (\Tilde{C}_2 + \Tilde{C}_3)(N r_{\sf opt} + \|\boldsymbol{r}\|_1) \\
        \mathbb{E}\left[\sum_{\ell \geq 1} |\mathcal{E}_\ell| \sum_{i \in [N]}r_i \sum_{S \in \mathcal{S}} y_\ell(S) \pi^{\sf UCB}_{\ell-1}(i,S)\bigg|\mathcal{A}^{\sf c}\right]\mathbb{P}(\mathcal{A}^{\sf c}) &\geq  0\\ \mathbb{E}\left[\sum_{i \in [N]} r_i \delta_{i, T}^{\sf shift}\bigg|\mathcal{A}^c\right]\mathbb{P}(\mathcal{A}^c) &\geq -N T r_{\max}.
\end{align*}

Therefore, 
\begin{align*}
    \text{Regret}_T &= T r_{\sf opt} - \mathbb{E}\left[\sum_{t \geq 1} r_{c_t} \right] \\
    &\leq  T r_{\sf opt} - \left(T r_{\sf opt} - A_q r_{\sf inv} \sqrt{T} - 15r_{\sf opt}\right) \\
    &+2\Tilde{C}_1\sqrt{\log\left(\sqrt{N}T^2 + 1\right)}\left(\|\boldsymbol{v}\|_2 r_{\sf opt}+ \|\boldsymbol{v}\cdot\boldsymbol{r}\|_2\right) \sqrt{T} \\ &+(\Tilde{C}_2 + \Tilde{C}_3)(N r_{\sf opt} + \|\boldsymbol{r}\|_1) \\
    &+ N T r_{\max} \\
    &= A_q r_{\sf inv} \sqrt{T} + 15r_{\sf opt} \\ &+ 2\Tilde{C}_1\sqrt{\log\left(\sqrt{N}T^2 + 1\right)}\left(\|\boldsymbol{v}\|_2 r_{\sf opt} + \|\boldsymbol{v}\cdot\boldsymbol{r}\|_2\right) \sqrt{T} \\ &+ (\Tilde{C}_2 + \Tilde{C}_3)(N r_{\sf opt} + \|\boldsymbol{r}\|_1) + 15 N r_{\max} \\
    &= B_0 \left(r_{\max}+ r_{\sf opt} + \frac{\|\boldsymbol{r}\|_1}{N} \right) N +  B_1 \left(r_{\sf inv} + \|\boldsymbol{v}\|_2 r_{\sf opt}  + \|\boldsymbol{v}\cdot\boldsymbol{r}\|_2\right)\sqrt{T} \\
    &\lesssim r_{\max} N + \left(r_{\sf inv} + \|\boldsymbol{v}\|_2 r_{\sf opt}  + \|\boldsymbol{v}\cdot\boldsymbol{r}\|_2\right)\sqrt{T}.
\end{align*}

\noindent where we set $B_0 = \max\left(15, \Tilde{C}_2 + \Tilde{C}_3\right)$ and $B_1 = \max\left(1, 2\Tilde{C}_1\sqrt{\log\left(\sqrt{N}T^2 + 1, A_q\right)}\right)$. This completes the proof of of Theorem \ref{main_theorem}.
\end{APPENDIX}
%
%




\end{document}